\title{Passed \& Spurious: \\Descent Algorithms and Local
		Minima in Spiked Matrix-Tensor Models} 
\author[a]{Stefano Sarao Mannelli}
\author[b]{Florent Krzakala}
\author[a]{\\ Pierfrancesco Urbani}
\author[a]{Lenka Zdeborov\'a}
\affil[a]{Institut de physique th\'eorique, Universit\'e Paris Saclay, CNRS, CEA, 91191 Gif-sur-Yvette, France}
\affil[b]{Laboratoire de Physique de l’Ecole normale sup\'erieure, Universit\'e PSL, CNRS, Sorbonne Universit\'e, Universit\'e Paris-Diderot, Sorbonne Paris Cit\'e, Paris, France}
\date{}
\newtheorem*{rep@theorem}{\rep@title}
\newcommand{\newreptheorem}[2]{%
\newenvironment{rep#1}[1]{%
 \def\rep@title{#2 \ref{##1}}%
 \begin{rep@theorem}}%
 {\end{rep@theorem}}}
\newtheorem{theorem}{Theorem} 
\newtheorem{lemma}{Lemma} 
\newtheorem{property}{Property} 
\newcommand{\Cmag}{m}
\begin{document}

\maketitle

\begin{abstract}
In this work we analyse quantitatively the interplay between the loss landscape and performance of descent algorithms in a prototypical inference problem, the spiked matrix-tensor model. We study a loss function that is the negative log-likelihood of the model. 
We analyse the number of local minima at a fixed distance from the signal/spike with the Kac-Rice formula, and locate trivialization of the landscape at large signal-to-noise ratios. 
We evaluate in a closed form the performance of a gradient flow algorithm using integro-differential PDEs as developed in physics of disordered systems for the Langevin dynamics. 
We analyze the performance of an approximate message passing algorithm estimating the maximum likelihood configuration via its state evolution. 
We conclude by comparing the above results: while we observe a
drastic slow down of the gradient flow dynamics even in the region
where the landscape is trivial, both the analyzed algorithms are shown
to perform well even in the part of the region of parameters where
spurious local minima are present. 
%
\end{abstract}

\tableofcontents

\section{Introduction}\label{sec:intro}

\begin{figure*}[ht]
	\vskip 0.2in
	\begin{center}
		\centering
		\includegraphics[width=0.49\columnwidth]{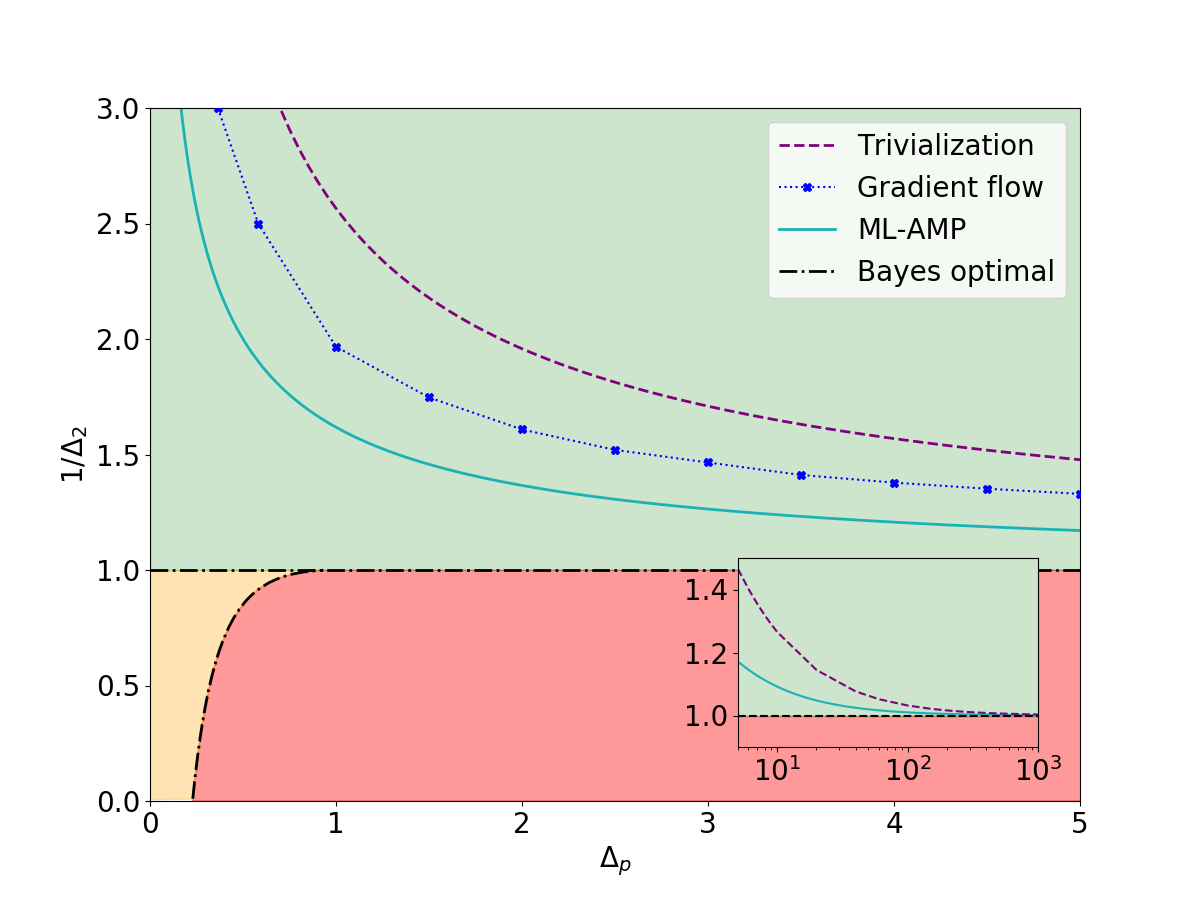}
		\includegraphics[width=0.49\columnwidth]{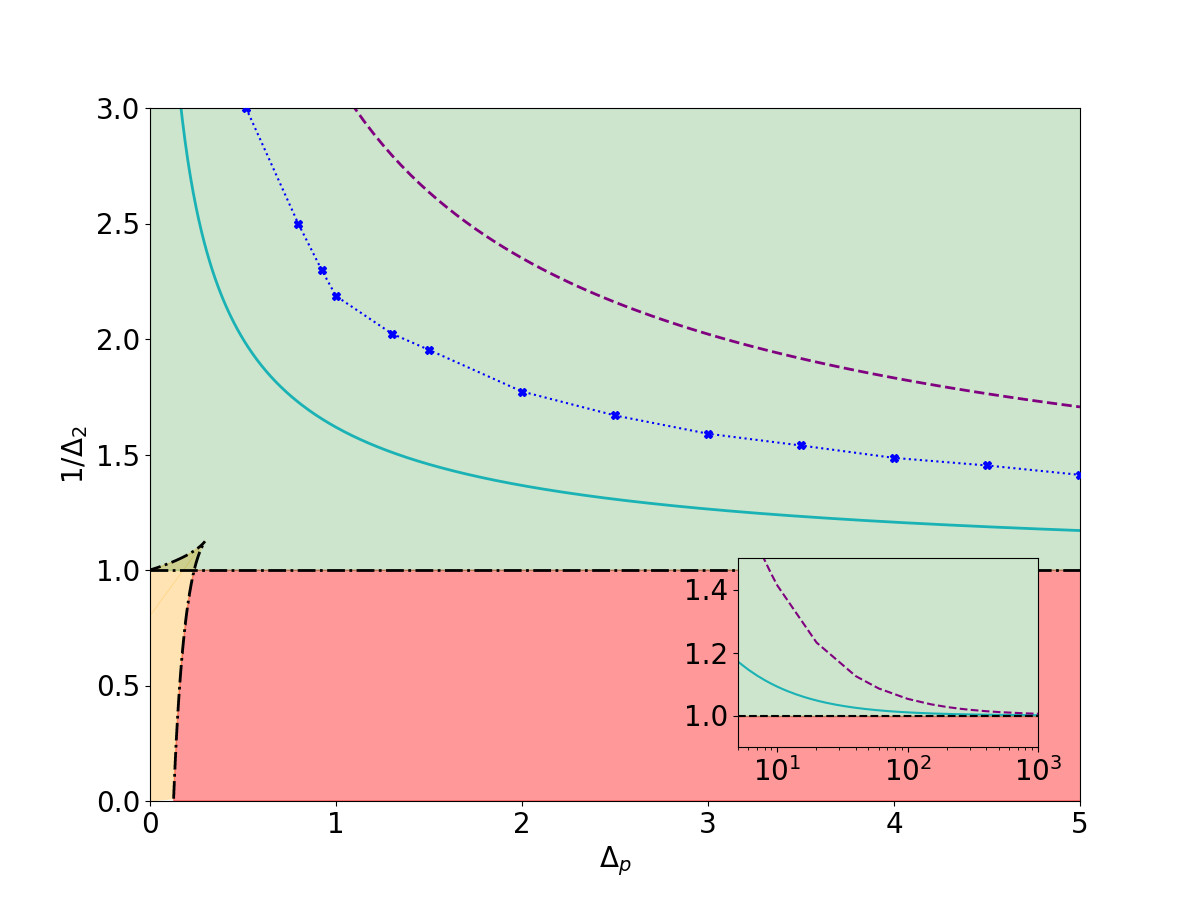}
		\caption{The figure summarizes the main results of this
		paper for the spiked matrix-tensor model with $p=3$ (left)
		and $p=4$ (right). As a function of the tensor-noise parameter
		$\Delta_p$ on the x-axes, we plot the values of $1/\Delta_2$
		above which the following happens (from above): Above
		$\Delta_2^{\rm triv}$ (the dashed purple line) the landscape of the problem becomes
		trivial in the sense that all spurious local minima
		disappear. Above $\Delta_2^{\rm GF}$ (the dotted blue line) and
		$\Delta_2^{\rm ML-AMP}$ (the full cyan line), Eq.~\eqref{eq:AMP stability m0}, the gradient flow and
		the ML-AMP
		algorithm, respectively, converge close to the ground truth signal in time
		linear in the input size. While the results for Kac-Rice and
		ML-AMP are given in a closed form, the ones for GF are obtained by
		extrapolating a convergence time obtained by numerical
		solution of integro-differential equations that describe
		large size behaviour of the GF. We note that all the three
		lines $\Delta_2^{\rm triv}$, $\Delta_2^{\rm GF}$, and
		$\Delta_2^{\rm ML-AMP}$ converge to 1 as $\Delta_p\to \infty$,
		consistently with the spiked matrix model.  These three lines,
		related to minimization of the landscape, and
		their mutual positions, are the
		main result of this paper.
		The colors in the background, separated by the black dashed-dotted lines, show
		for comparison the phase diagram for the Bayes-optimal inference, related to
		the ability to approximate the marginals of the corresponding posterior
		probability distribution, and  are taken
		from \cite{sarao2018marvels}.
		In the red region obtaining a positive correlation with the signal in
		information-theoretically impossible. In the green region it is
		possible to obtain optimal correlation with the signal using the
		Bayes-optimal AMP (BO-AMP). And in the orange the region the BO-AMP is
		not able to reach the Bayes-optimal performance.
		The insets show that in the limit
		$\Delta_p \to \infty$ all the described thresholds
		converge to the well-known BBP
		phase transition at $\Delta_2^{\rm BPP} = 1$ \cite{baik2005phase}.
		}
		%
		\label{fig:phase_diagram_simple}
	\end{center}
	\vskip -0.15in
\end{figure*}

A central question in computational sciences is the algorithmic
feasibility of optimization in high-dimensional non-convex
landscapes. This question is particularly important in
learning and inference problems where the value of the optimized
function is not the
ultimate criterium for quality of the result, instead the generalization error or the closeness to a ground-truth signal is more relevant.

Recent years brought a popular line of research into this question
where various works show for a variety of systems that spurious local
minima are not present in certain regimes of parameters and conclude
that consequently optimization algorithms shall succeed, without the
aim of being exhaustive these include \cite{kawaguchi2016deep,soudry2016no,ge2016matrix,freeman2016topology,bhojanapalli2016global,park2016non,du2017gradient,ge2017no,du2017gradient,lu2017depth}. The
	{\it spuriosity} of a minima is in some works defined by their distance from the
global minimum, in other works as local minimizers that lead
to bad generalization or bad accuracy in reconstruction of the ground
truth signal. These two notions are not always equivalent, and
certainly the later is more relevant and will be used in the
present work.

Many of the existing works stop at the statement that absence of
spurious local minimizers leads to algorithmic feasibility and the
presence of such spurious local minima leads to algorithmic difficulty, at least as far as gradient-descent-based algorithms are concerned. At the same
time, even gradient-descent-based algorithms may be able to perform well even when spurious local
minima are present. This is because the basins of attraction of the
spurious minimas may be small and the dynamics might be able to avoid
them. In the other direction, even if spurious local minima are
absent, algorithms might take long time to find a minimizer for
entropic reasons that in high-dimensional problems may play a crucial
role.

{\bf Main Results:}
We study the spiked matrix-tensor model,
introduced and motivated in \cite{sarao2018marvels}. We view this
model as a prototypical solvable example of a high-dimensional non-convex
optimization problem, and anticipate that the results observed here
will have a broader relevance. Our main contributions are:
\begin{itemize}
	\item  Using the Kac-Rice formula
	      \cite{fyodorov2004complexity,arous2017landscape} we
	      rigorously derive the expected number of local minimizers of the associated likelihood at a given
	      correlation with the ground truth signal.
	\item We extend the recently introduced {\it
			      Langevin-state-evolution} \cite{sarao2018marvels} to the
	      gradient flow (GF) algorithm, obtaining a closed-form
	      formula for the obtained accuracy in the limit of large
	      system sizes. This formula is conjectured exact, and could
	      likely be established by extending \cite{arous2006cugliandolo}.
	\item We derive and analyze the state evolution that
	      rigorously describes the performance of the maximum-likelihood version of the approximate message
	      passing algorithm (ML-AMP) for the present model.
\end{itemize}

We show that the above two algorithms (GF and ML-AMP) achieve the same error
in the regime where they succeed. That same value of the error is also deduced from the
position of all the minima strongly correlated with the signal as
obtained from the Kac-Rice approach (precise statement below). We
quantify the region of parameters in which the two above algorithms
succeed and show that the ML-AMP is strictly better than GF.
Remarkably, we show that the algorithmic performance is not driven by the
absence of spurious local minima. These results are summarized
in Fig.~\ref{fig:phase_diagram_simple} and show that, in
order to obtain a complete picture for settings beyond the present
model, the precise interplay between absence of spurious local minima and
algorithmic performance remains to be further investigated.


\section{Problem Definition}\label{sec:model}

In this paper we consider the spiked matrix-tensor model as studied in
\cite{sarao2018marvels}. This is a statistical inference problem
where the ground truth signal $x^*\in\mathbb{R}^N$ is sampled
uniformly on the $N-1$-dimensional sphere, $\mathbb{S}^{N-1}(\sqrt
	N)$. We then obtain two types of observations about the signal, a
symmetric matrix
$Y$, and an order $p$ symmetric tensor $T$, that given the signal $x^*$ are
obtained as
\begin{align}\label{eq:channel Y}
	 & Y_{ij} = \frac{x_i^{*}x_j^*}{\sqrt{N}}+\xi_{ij},                                                  \\
	\label{eq:channel T}
	 & T_{i_1,\dots,i_p} = \frac{\sqrt{(p-1)!}}{N^{(p-1)/2}}x_{i_1}^*\dots x_{i_p}^*+\xi_{i_1,\dots,i_p}
\end{align}
for $1 \le i < j \le N$ and $1 \le i_1 < \dots < i_p \le N$, using the symmetries to obtain
the other non-diagonal components. Here $\xi_{ij}$ and
$\xi_{i_1,\dots,i_p}$ are for each $i<j$ and each $i_1 < \dots <  i_p$
independent Gaussian random numbers of zero mean and variance $\Delta_2$
and $\Delta_p$, respectively.

The goal in this spiked matrix-tensor inference problem is to estimate
the signal $x^*$ from the knowledge of the matrix $Y$ and tensor
$T$. If only the matrix was present, this model reduces to well known
model of low-rank perturbation of a random symmetric matrix,  closely
related to the spiked
covariance model \cite{johnstone2001distribution}. If on the contrary only the tensor is
observed then the above model reduces to the spiked tensor model as
introduced in \cite{richard2014statistical} and studied in a range of
subsequent papers.

In this paper we study the matrix-tensor model where the two
observations are combined. Our motivation is
similar to the one exposed in \cite{sarao2018marvels}, that is, we aim to
access a regime in which it is algorithmically tractable to obtain
good performance with corresponding message passing
algorithms yet it is challenging (e.g. leading to non-convex optimization) with sampling or gradient descent based
algorithms, this happens when both $\Delta_2 = \Theta(1)$ and $\Delta_p =
	\Theta(1)$, while $N\to \infty$ \cite{sarao2018marvels}.

In this paper we focus on algorithms that aim to find the maximum
likelihood estimator. The negative log-likelihood (Hamiltonian in physics, or loss function in machine learning) of the spiked matrix-tensor
reads
\begin{equation}\label{eq:Hamiltonian}
	\begin{split}
		\mathcal{L} &= \sum_{i<j}\frac1{2\Delta_2}\left(Y_{ij}-\frac{x_ix_j}{\sqrt{N}}\right)^2+\sum_{ i_1<\dots<i_p}\frac1{2\Delta_p}\left(T_{i_1\dots i_p}-\frac{\sqrt{(p-1)!}}{N^{(p-1)/2}}x_{i_1}\dots x_{i_p}\right)^2,
	\end{split}
\end{equation}
where $x \in \mathbb{S}^{N-1}(\sqrt
	N)$ is constrained to the sphere.

In a high-dimensional, $N\to
	\infty$, noisy regime the maximum-likelihood estimator is not always
optimal as it provides in general larger error than the Bayes-optimal
estimator computing the marginals of the posterior, studied in
\cite{sarao2018marvels}. At the same time the
log-likelihood (\ref{eq:Hamiltonian}) can be seen as a loss function,
that is non-convex and high-dimensional. The tractability and
properties of such minimization problems are the most questioned
in machine learning these days, and are worth detailed investigation
in the present model.


\section{Landscape Characterization}\label{sec:kac-rice}

\begin{figure*}
	\vskip 0.2in
	\begin{center}
		\centering
		\subfigure[$\Delta_2 = 2$]{
		\includegraphics[width=.31\columnwidth]{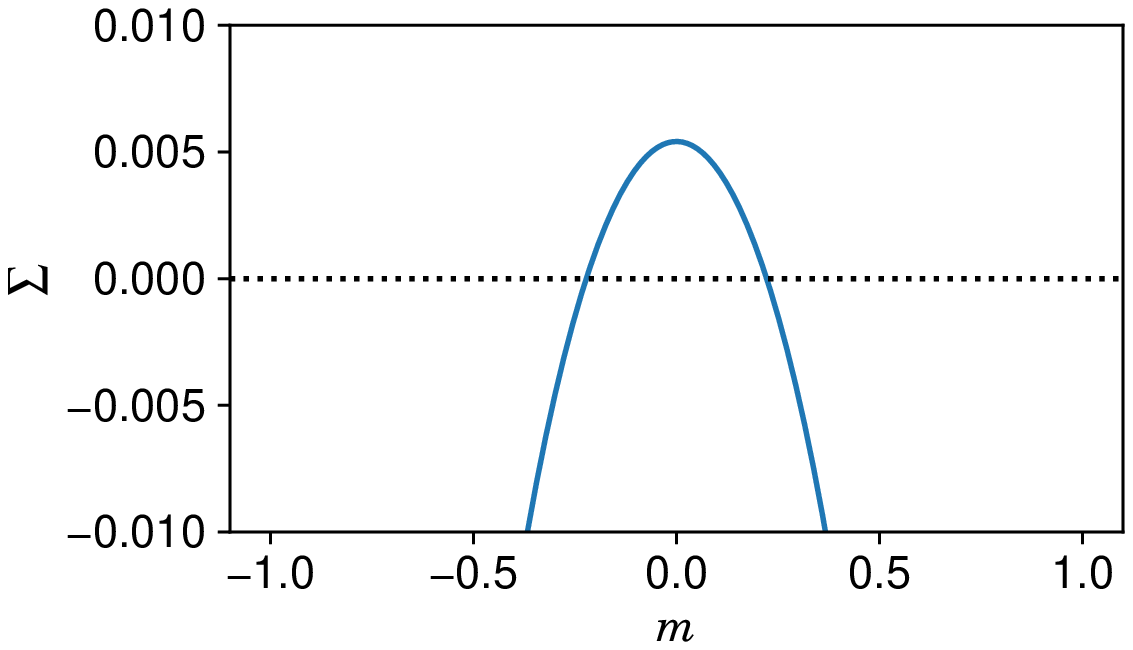}
		}
		\subfigure[$\Delta_2 = 2/3$]{
		\includegraphics[width=.31\columnwidth]{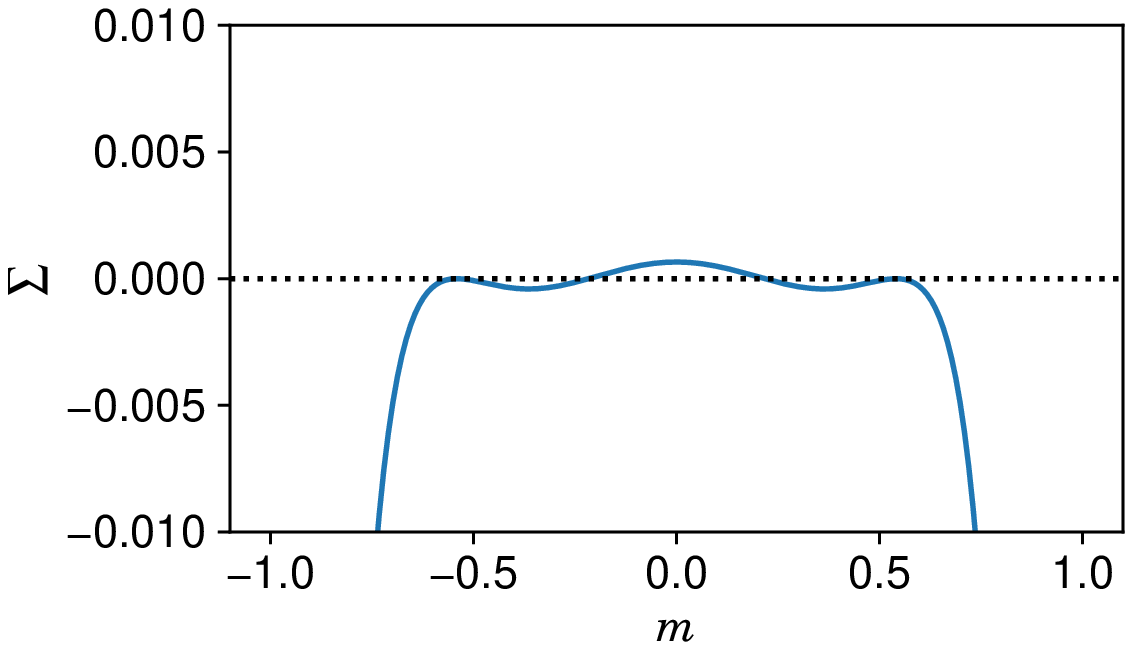}
		}
		\subfigure[$\Delta_2 = 2/5$]{
		\includegraphics[width=.31\columnwidth]{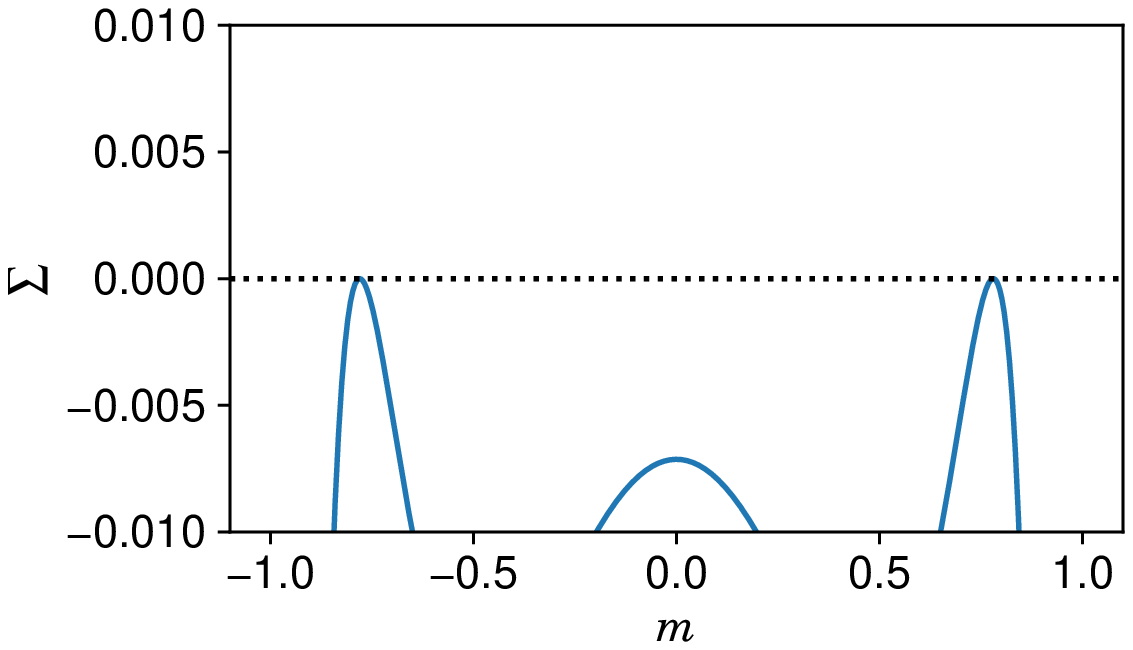}
		}
		\caption{The complexity $\Sigma(m)$, Eq.~\eqref{eq:complexity}, is shown for different
			values of parameter $\Delta_2$ at fixed $\Delta_p=4.0$ in
			the case $p=4$. As $\Delta_2$ is decreased (the signal
			to noise ratio increases) the complexity allows to identify
			three main scenarios in the topology of the loss landscape. In
			the first case (a) only a wide band of non-negative complexity
			around the point of zero correlation is present, in the
			second case (b) minima with non-trivial correlation with
			the signal appear but the band around $m=0$ is still present,
			finally (c) the signal dominates over the noise and only minima
			with non-trivial correlation are present. The transition
			from case (b) to case (c), i.e. when the support of
			$\Sigma(m) \ge 0$ becomes two discontinuous points, as the
			bulk close to $m=0$ becomes negative, is called the landscape trivialization. The
			$\Delta_2$ at which this occurs is denoted $\Delta_2^{\rm
					triv}$ and depicted in dashed purple in Fig.~\ref{fig:phase_diagram_simple}.
		}
		\label{fig:complexity}
	\end{center}
	\vskip -0.2in
\end{figure*}

The first goal of this paper is to characterize the structure of local minima of
the loss function (equivalently local maxima of the log-likelihood)
eq.~(\ref{eq:Hamiltonian}) as a function of the noise parameters
$\Delta_2$ and $\Delta_p$. We compute the average
number of local minimizers $x$ having a given correlation with the ground
truth signal  $m = \lim_{N\to \infty} x \cdot x^*/N$. This leads to a so-called {\it complexity} function
$\Sigma(m)$ defined as the logarithm of the
expected number of local minima at correlation $m$ with the ground
truth.

A typical example of this function, resulting from our analysis, is depicted in
Fig.~\ref{fig:complexity} for $p=4$, $\Delta_p=4.0$, and several values of
$\Delta_2$. We see from the figure that at large $\Delta_2$ local minima
appear only in a narrow range of values of $m$ close to zero, as
$\Delta_2$ decreases the
support of $\Sigma(m) \ge 0$ widens. At yet smaller values of
$\Delta_2$ the support $\Sigma(m) \ge 0$ becomes disconnected so that
it is supported on an interval of value close to $m=0$ and on two (one
negative, one positive) isolated points.
For yet smaller $\Delta_2$ the complexity for values of $m$ close to zero becomes
negative, signalling what we call a {\it trivialization of the landscape}, where all
remaining local minima are (in the leading order in $N$) as correlated
with the ground truth as the global minima. The support of $\Sigma(m)
	\ge 0$ in the trivialized regions consists of two separated points. We call the value of
$\Delta_2$ at which the trivialization happens $\Delta_2^{\rm triv}$. In the phase diagram of
Fig.~\ref{fig:phase_diagram_simple} the trivialization of the energy landscape happens above
the purple dashed line.

We use the Kac-Rice formula to determine the complexity
$\Sigma(m)$ \cite{adler2009random,fyodorov2013high}. Given an
arbitrary continuous function, the Kac counting formula allows to
compute the number of points where the function crosses a given
value. The number
of minima can be characterized using Kac's formula on the gradient of
the loss (\ref{eq:Hamiltonian}), counting how many time the
gradient crosses the zero value, under the condition of having a
positive definite Hessian in order to count only local minima and not
saddles. Since the spiked matrix-tensor model is characterized by a
random landscape, due to the noise $\xi_{ij}$ and
$\xi_{i_1,\dots,i_p}$, we
will consider the expected number of minima obtaining the Kac-Rice
formula \cite{adler2009random,fyodorov2013high}.

For mathematical convenience we will consider the rescaled
configurations $\sigma = x/\sqrt{N} \in\mathbb{S}^{N-1}(1)$, and
rescaled signal $\sigma^* = x^*/\sqrt{N}$.
Call $\phi_{G,F_2,F_p}$ the joint probability density of the
gradient $G$ of the loss, and of the $F_2$ and $F_p$ the contributions of the matrix and tensor
to the loss, respectively. Given the value of the two
contributions to the loss  $F_2=\epsilon_2 N$ and $F_p=\epsilon_p N$, and the correlation between the configuration and ground truth $m\in [-1,+1]$ that we impose using a Dirac's delta, the averaged number of minimizers is
\begin{equation}
	\begin{split}
		\mathcal{N}&(m,\epsilon_2,\epsilon_p;\Delta_2,\Delta_p) = e^{\tilde{\Sigma}_{\Delta_2,\Delta_p}(m,\epsilon_2,\epsilon_p)} =
		\\
		& =\int_{\mathbb{S}^{N-1}} \! \! \! \! \! \!  \mathbb{E}\left[\det H\big|G=0,\frac{F_2}{N}=\epsilon_2,\frac{F_p}{N}=\epsilon_p,H\succ0\right]		
		\phi_{G,F_2,F_p}(\sigma,0,\epsilon_2,\epsilon_p)\,
		\delta(m-\sigma\cdot\sigma^*) \, {\rm d}\sigma\,.
	\end{split}
\end{equation}
Rewrite the loss Eq.~\eqref{eq:Hamiltonian} neglecting terms that are constant with respect to the configuration and thus do not contribute to the complexity
\begin{equation}\label{eq:HamiltonianKR}
	\begin{split}
		\hat{\mathcal{L}} &= \frac{\sqrt{N(p-1)!}}{\Delta_p}\sum_{i_1<\dots<i_p} \xi_{i_1\dots i_p}\sigma_{i_1}\dots \sigma_{i_p}
		- \frac{N(p-1)!}{\Delta_p}\sum_{i_1<\dots<i_p}\sigma^*_{i_1}\sigma_{i_1}\dots \sigma^*_{i_p}\sigma_{i_p}
		\\
		&+
		\frac{\sqrt{N}}{\Delta_2}\sum_{i<j}\xi_{ij}\sigma_i\sigma_j
		-
		\frac{N}{\Delta_2}\sum_{i<j}\sigma^*_i\sigma_i\sigma^*_j\sigma_j\, .
	\end{split}
\end{equation}
In the following we will use small letters $f_2$, $f_p$, $g$, $h$ to characterize losses, gradient and Hessian constrained on the sphere and capital letters for the same quantities unconstrained. Define $\mathbb{I}_d$ the $d$-dimensional identity matrix. The following lemma characterizes $\phi_{G,F_2,F_p}$.
\begin{lemma}
	Given the loss function Eq.~\eqref{eq:HamiltonianKR} and a configuration $x$ such that the correlation and the signal is $m$, then there exists a reference frame such that the joint probability distribution of $f_2, f_p \in \mathbb{R}$, $g\in\mathbb{R}^{N-1}$ and $h \in \mathbb{R}^{(N-1)\times(N-1)}$ is given by
	\begin{align}
		\label{eq:KR_f_rv}
		 & \frac{f_k}{N} \sim -\frac1{k\Delta_k}m^k + \frac1{\sqrt{k\Delta_k}}\frac1{\sqrt{N}}{Z}_k\,;
		\\
		\label{eq:KR_g_rv}
		\begin{split}
			& \frac{g}{N} \sim -\left(\frac1{\Delta_p}m^{p-1} + \frac1{\Delta_2}m\right)\sqrt{1-m^2} \mathbb{e}_1
			+ \sqrt{\frac1{\Delta_p}+\frac1{\Delta_2}}\frac1{\sqrt{N}}\tilde{\textbf{Z}}\,;
		\end{split}
		\\
		\label{eq:KR_h_rv}
		\begin{split}
			& \frac{h}{N} \sim -\left(\frac{p-1}{\Delta_p}m^{p-2} + \frac{1}{\Delta_2}\right)(1-m^2) \mathbb{e}_1\mathbb{e}_1^T
			+ \sqrt{\frac{p-1}{\Delta_p}+\frac{1}{\Delta_2}}\sqrt{\frac{N-1}{N}}\mathbb{W} - (pf_p + 2f_2)\mathbb{I}_{N-1}\,;
		\end{split}
	\end{align}
	with $Z_k$ standard Gaussians and $k\in\{2,p\}$, $\tilde{\textbf{Z}}\sim\mathcal{N}(0,\mathbb{I}_{N-1})$ a standard multivariate Gaussian and $\mathbb{W}\sim\text{GOE}(N-1)$ a random matrix from the Gaussian orthogonal ensemble.
\end{lemma}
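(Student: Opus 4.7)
The plan is to decompose each of $f_2, f_p, g, h$ into a deterministic (signal-dependent) part and a Gaussian noise part that is linear in the $\xi$'s once $\sigma$ is fixed, compute the joint covariance in a reference frame adapted to $\sigma$ and $\sigma^{*}$, and invoke joint Gaussianity. I fix $\mathbb{e}_0 = \sigma$ and $\mathbb{e}_1 = (\sigma^{*}-m\sigma)/\sqrt{1-m^2}$, complete to an orthonormal basis $(\mathbb{e}_0,\mathbb{e}_1,\dots,\mathbb{e}_{N-1})$ of $\mathbb{R}^N$ so that $\sigma^{*} = m\,\mathbb{e}_0 + \sqrt{1-m^2}\,\mathbb{e}_1$, and work throughout in this basis. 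The tangent space to $\mathbb{S}^{N-1}$ at $\sigma$ is the span of $\mathbb{e}_1,\dots,\mathbb{e}_{N-1}$, and $g,h$ are the Euclidean gradient and Hessian of $\hat{\mathcal{L}}$ restricted to this subspace.

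For the signal pieces I use $\sigma\cdot\sigma^{*}=m$ together with the elementary identity $\sum_{i_1<\cdots<i_k}\sigma^{*}_{i_1}\sigma_{i_1}\cdots\sigma^{*}_{i_k}\sigma_{i_k} = m^k/k! + O(1/N)$, where the correction is from coincidences of indices and is subleading in every relevant observable. Substituting into \eqref{eq:HamiltonianKR} gives $f_k^{\mathrm{sig}}/N = -m^k/(k\Delta_k)$; differentiating with respect to $\sigma$ and projecting onto the tangent space produces exactly the deterministic terms in \eqref{eq:KR_g_rv} and \eqref{eq:KR_h_rv}, because $P\sigma^{*} = \sqrt{1-m^2}\,\mathbb{e}_1$ and $P\sigma^{*}(\sigma^{*})^T P = (1-m^2)\,\mathbb{e}_1\mathbb{e}_1^T$.

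The noise pieces of $f_k$, of the Euclidean gradient, and of the Euclidean Hessian are each linear combinations of the independent Gaussians $\xi$ for $\sigma$ fixed, so they are jointly Gaussian with zero mean, and only the covariance matrix needs to be computed. A direct variance calculation using $\sum_i\sigma_i^2=1$ and the combinatorial count of how one or two indices can be frozen in a $p$-tensor contraction gives $\mathrm{Var}(f_k^{\mathrm{noise}}/N) = 1/(Nk\Delta_k)$, an isotropic gradient-noise covariance $(1/\Delta_p + 1/\Delta_2)\,\mathbb{I}_{N-1}/N$ on the tangent space, and a Hessian noise of Gaussian orthogonal type with variance parameter $(p-1)/\Delta_p + 1/\Delta_2$; the factor $\sqrt{(N-1)/N}$ comes from the standard GOE normalization after restriction to the $(N-1)$-dimensional tangent space. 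The cross-covariances between any two of these noise objects always involve tensor contractions in which at least one free index is contracted with $\sigma$, so they point in the $\sigma$-direction (as a vector, or in one or two slots as a matrix) and are annihilated by the tangential projection; the matrix and tensor channels are independent by construction, so their contributions simply add. This yields the claimed independence and variances.

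Finally, the additive term $-(pf_p + 2f_2)\mathbb{I}_{N-1}$ in \eqref{eq:KR_h_rv} arises from the Riemannian Hessian on the sphere: for a smooth $\phi$ with $\|\sigma\|=1$, one has $\mathrm{Hess}\,\phi|_{\mathbb{S}^{N-1}} = P(\nabla^2\phi)P - (\sigma\cdot\nabla\phi)P$; since $\hat{\mathcal{L}}^{(k)}$ is $k$-homogeneous in $\sigma$, Euler's identity gives $\sigma\cdot\nabla\hat{\mathcal{L}}^{(k)} = k\hat{\mathcal{L}}^{(k)} = kf_k$, producing precisely this shift with coefficients $p$ and $2$. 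The main bookkeeping hurdle is the noise-covariance step: one must verify carefully that all diagonal corrections in the $p$-tensor sums are genuinely subleading in $1/N$ at the level of each observable, and that every cross-covariance really lies in the $\sigma$-direction and is thus killed by the tangential projection, so the listed variances and the independence statement hold at leading order.
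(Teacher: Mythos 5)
Your proposal is correct and follows essentially the same route as the paper's proof: decompose each quantity into a deterministic signal part plus a jointly Gaussian noise part, compute means and covariances (the paper does this by differentiating the covariance function $\mathbb{E}[F_k(\sigma)F_k(\tau)]=\frac{N}{k\Delta_k}(\sigma\cdot\tau)^k$ \`a la Adler--Taylor, you by direct contraction of the $\xi$-sums --- the same computation), pass to the frame $\sigma^*=\sqrt{1-m^2}\,\mathbb{e}_1+m\,\sigma$, and observe that all cross-covariances are proportional to $\sigma$ in some slot and are killed by the tangential projection. The only cosmetic difference is that you obtain the shift $-(pf_p+2f_2)\mathbb{I}_{N-1}$ from the intrinsic Riemannian-Hessian formula together with Euler's identity for the $k$-homogeneous $F_k$, whereas the paper gets the same value of the Lagrange multiplier $\mu=2f_2+pf_p$ from the stationarity condition $g\equiv 0$; both are equivalent in the Kac--Rice context where the Hessian is only ever evaluated at critical points.
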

\begin{proof}
	Starting from Eq.~\eqref{eq:HamiltonianKR}, split the
	contributions of the matrix and tensor in $F_2$ and $F_p$, two Gaussian variables and impose the spherical constrain with a Lagrange multiplier $\mu$.
	\begin{align}
		\label{eq:KR_f_sphere}
		\begin{split}
			& f_2(\sigma)+f_p(\sigma) = F_2(\sigma) + F_p(\sigma) - \frac\mu2 \big(\sum_i \sigma_i^2-1\big)\,,
		\end{split}
		\\
		\label{eq:KR_g_sphere}
		 & g_i(\sigma) = G_i(\sigma) - \mu \sigma_i\,, \\
		\label{eq:KR_h_sphere}
		 & h_{ij}(\sigma) = H_{ij}(\sigma) -\mu\,.
	\end{align}
	The expression for $\mu$ in a critical point can be derived as
	follows. Given $g_i(\sigma)\equiv0$, multiply
	Eq.~\eqref{eq:KR_g_sphere} by $\sigma_i$, sum over the indices
	and obtain: $\mu = \sum_i G_i(\sigma)\sigma_i =
		2f_2(\sigma) + pf_p(\sigma)$. We now restrict our study to the unconstrained random variables and substitute $\mu$. Since the quantities $f_2$, $f_p$, $g$, $h$, $\mu$ are linear functionals of Gaussians they will be distributed as Gaussian random variables and therefore can be characterized by computing expected values and covariances.
	Starting from the losses coming from the matrix and the
	tensor in Eq.~\eqref{eq:HamiltonianKR}, $F_2(\sigma)$ and
	$F_p(\sigma)$, respectively, consider the moments with respect to the realization of the noise, $\xi_{i_1\dots i_p}$, $\xi_{ij}$. For $k\in\{2,p\}$ the first moment leads to
	\begin{equation}
		\mathbb{E}[F_k(\sigma)] = -\frac{N}{k\Delta_k}(\sigma \cdot \sigma^*)^{k} + O(1)\,.
	\end{equation}
	Let's consider the second moment but having two different configurations $\sigma$ and $\tau$,
	\begin{equation}
		\mathbb{E}\left[F_k(\sigma)F_k(\tau)\right] = \frac{N}{k\Delta_k}( \sigma \cdot\tau)^k + O(1)\,.
	\end{equation}
	Using standard results for derivatives of Gaussians (see e.g. \cite{adler2009random} Eq.~5.5.4) we can obtain means and covariances of the random variables taking derivatives with respect to $\sigma$ and $\tau$. Then set $\tau = \sigma$, imposing the spherical constrain and using $\sigma\cdot\sigma^*= m$.

	The last step is the definition of a convenient reference frame $\{\mathbb{e}_j\}_{j=1,\dots,N}$. Align the configuration along the last coordinate $\mathbb{e}_N = \sigma$ and the signal with a combination of the first and last coordinates $\sigma^* = \sqrt{1-m^2}\mathbb{e}_1 + m\mathbb{e}_N$. Finally, project on the sphere by discarding the last coordinate.
\end{proof}

\vskip -0.1in

We can now rewrite the determinant of the conditioned Hessian by grouping the multiplicative factor in front of the GOE in Eq.~\eqref{eq:KR_h_rv}
\begin{equation}
	\begin{split}
		\det h &= \left(\frac{p-1}{\Delta_p}+\frac{1}{\Delta_2}\right)^{\frac{N-1}2}\left(\frac{N}{N-1}\right)^{-\frac{N-1}2}
		\det\left[\mathbb{W} + t_N \mathbb{I}_{N-1} - \theta_N \mathbb{e}_1\mathbb{e}_1^T\right]
	\end{split}
\end{equation}
with $t_N$ and $\theta_N$ given by
\begin{align}
	 & t_N\rightarrow t = -\frac{p\epsilon_p+2\epsilon_2}{\sqrt{\frac{p-1}{\Delta_p}+\frac{1}{\Delta_2}}},                                       \\
	 & \theta_N\rightarrow\theta = \frac{\frac{p-1}{\Delta_p} m^{p-2}+\frac{1}{\Delta_2}}{\sqrt{\frac{p-1}{\Delta_p}+\frac{1}{\Delta_2}}}(1-m^2)
\end{align}
in the large $N$-limit.
Therefore the Hessian behaves like a GOE shifted by $t$ with a rank
one perturbation of strength $\theta$. This exact same problem has
already been studied in \cite{arous2017landscape} and we can thus
deduce the expression for the complexity as
\begin{equation}\label{eq:complexity_annealed}
	\begin{split}
		\tilde{\Sigma}_{\Delta_2,\Delta_p}(m,\epsilon_2,\epsilon_p) &= \frac12\log\frac{\frac{p-1}{\Delta_p}+\frac{1}{\Delta_2}}{\frac1{\Delta_p}+\frac1{\Delta_2}} + \frac12\log(1-m^2)
		-\frac12\frac{\left(\frac{m^{p-1}}{\Delta_p}+\frac{m}{\Delta_2}\right)^2}{\frac1{\Delta_p}+\frac1{\Delta_2}}(1-m^2) +
		\\
		&- \frac{p\Delta_p}2\left(\epsilon_p+\frac{m^p}{p\Delta_p}\right)^2
		- \Delta_2\left(\epsilon_2+\frac{m^2}{2\Delta_2}\right)^2 + \Phi(t) - L(\theta,t),
	\end{split}
\end{equation}
with
\begin{equation*}
	\Phi(t) = \frac{t^2}4 + \mathbb{1}_{|t|>2}\left[\log\left(\sqrt{\frac{t^2}4-1}+\frac{|t|}2\right)-\frac{|t|}4\sqrt{t^2-4}\right]
\end{equation*}
\begin{equation*}
	L(\theta,t) = \begin{cases}
		 &
		\begin{split}
			& \frac14\int_{\theta+\frac1\theta}^t \sqrt{y^2-4}dy -\frac\theta2\left(t-\left(\theta+\frac1\theta\right)\right)
			\\
			& +\frac{t^2-\left(\theta+\frac1\theta\right)^2}8 \quad \theta>1,\; 2\le t<\frac{\theta^2+1}\theta
		\end{split}
		\\
		 & \infty \quad\quad t<2          \\
		 & 0 \quad\quad \text{otherwise.} \\
	\end{cases}
\end{equation*}
We note at this point that for the case of the pure spiked tensor
model $\Delta_2\to \infty$ the above expression reduces exactly to the
complexity derived in \cite{arous2017landscape}.
The following theorem states that to the leading order Eq.~\eqref{eq:complexity_annealed} represents the complexity of our problem.
\begin{theorem}
	Given $\Delta_2$ and $\Delta_p$, for any $(\epsilon_2,\epsilon_p) \in\mathbb{R}^2$ and $m\in[-1,+1]$ it holds
	\begin{equation}
		\begin{split}
			\lim_{N\rightarrow\infty}\frac1N\log\mathbb{E}\mathcal{N}&(m,\epsilon_2,\epsilon_p;\Delta_2,\Delta_p) =
			 \tilde{\Sigma}_{\Delta_2,\Delta_p}(m,\epsilon_p,\epsilon_2)
		\end{split}
	\end{equation}
\end{theorem}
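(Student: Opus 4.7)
The plan is to carry out the standard asymptotic evaluation of the Kac--Rice integral
\begin{equation*}
\mathbb{E}\mathcal{N} = \int_{\mathbb{S}^{N-1}} \mathbb{E}\bigl[\,\lvert\det h\rvert\,\mathbb{1}_{h\succ 0}\,\bigm|\,g=0,\,f_2=\epsilon_2 N,\,f_p=\epsilon_p N\bigr]\, \phi_{g,f_2,f_p}(0,\epsilon_2 N,\epsilon_p N)\, \delta(m-\sigma\cdot\sigma^*)\,\mathrm{d}\sigma,
\end{equation*}
substituting the joint law given in the preceding lemma. Because that law, read in the frame $\{\mathbb{e}_j\}$ aligned with $\sigma$ and $\sigma^*$, depends on $\sigma$ only through the overlap $m$, the integrand is rotationally invariant along the latitude $\{\sigma\cdot\sigma^*=m\}$. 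The spherical integral against the delta function therefore reduces to the measure of that latitude sphere, proportional to $(1-m^2)^{(N-2)/2}$, yielding the $\tfrac12\log(1-m^2)$ contribution to $\tilde{\Sigma}$ at the exponential scale, together with a $\tfrac{N}{2}\log N$ pole that will cancel against the rescalings of the density and of the Hessian determinant.

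Next I would evaluate the Gaussian density $\phi_{g,f_2,f_p}$ at $(0,\epsilon_2 N,\epsilon_p N)$. By the lemma the components of $g/\sqrt{N}$ are i.i.d.\ Gaussian with variance $\Delta_p^{-1}+\Delta_2^{-1}$ and independent of $(f_2,f_p)$, so the density factorizes. Evaluating the gradient piece at $g=0$ produces at the exponential scale the term $-\tfrac12(m^{p-1}/\Delta_p+m/\Delta_2)^2(1-m^2)/(1/\Delta_p+1/\Delta_2)$, while the covariance normalization contributes a $\tfrac12\log$ factor that, when combined with the analogous prefactor coming out of \eqref{eq:KR_h_rv}, produces the first term $\tfrac12\log\bigl[((p-1)/\Delta_p+1/\Delta_2)/(1/\Delta_p+1/\Delta_2)\bigr]$ of \eqref{eq:complexity_annealed}. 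Evaluating $(f_2,f_p)$ at $(\epsilon_2 N,\epsilon_p N)$ yields the quadratic penalties $-\tfrac{p\Delta_p}{2}(\epsilon_p+m^p/(p\Delta_p))^2$ and $-\Delta_2(\epsilon_2+m^2/(2\Delta_2))^2$.

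The crux is the conditional expectation of $\lvert\det h\rvert\,\mathbb{1}_{h\succ 0}$. Under the joint Gaussian law of the lemma, the GOE block $\mathbb{W}$ in \eqref{eq:KR_h_rv} is independent of the gradient (by the standard independence of Gaussian values and second derivatives at a point on the sphere) and of the losses, so conditioning on $g=0$ and on $(f_2,f_p)=(\epsilon_2 N,\epsilon_p N)$ only fixes the deterministic shift $-(pf_p+2f_2)\mathbb{I}_{N-1}$ and leaves the fluctuating part untouched. Factoring out $((p-1)/\Delta_p+1/\Delta_2)^{(N-1)/2}$ and the rescaling $\sqrt{(N-1)/N}$, the conditional expectation reduces to
\begin{equation*}
\mathbb{E}\bigl[\,\lvert\det(\mathbb{W}+t_N\mathbb{I}_{N-1}-\theta_N\mathbb{e}_1\mathbb{e}_1^T)\rvert\,\mathbb{1}_{\mathbb{W}+t_N\mathbb{I}-\theta_N\mathbb{e}_1\mathbb{e}_1^T\succ 0}\,\bigr],
\end{equation*}
with $t_N\to t$ and $\theta_N\to\theta$ as stated in the excerpt. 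This is exactly the shifted-GOE-plus-rank-one determinant whose large-$N$ logarithmic asymptotics equal $\Phi(t)-L(\theta,t)$ by the analysis of \cite{arous2017landscape}.

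Collecting the three contributions --- spherical latitude, Gaussian densities for $g$ and $(f_2,f_p)$, and the shifted-GOE determinant --- and checking that all polynomial and $\log N$ prefactors (Gamma factors from the spherical surface measure, covariance normalizations, the $\sqrt{(N-1)/N}$ rescaling in the Hessian) combine into an $o(N)$ remainder yields $N^{-1}\log\mathbb{E}\mathcal{N}\to\tilde{\Sigma}_{\Delta_2,\Delta_p}(m,\epsilon_p,\epsilon_2)$. The main obstacle I expect is the last paragraph: importing the determinant asymptotics from \cite{arous2017landscape} with enough uniformity in $(t,\theta)$ as these vary continuously in $(m,\epsilon_2,\epsilon_p,\Delta_2,\Delta_p)$, and carefully justifying that the rank-one perturbation along $\mathbb{e}_1$ --- the direction along which $g$ also has a non-trivial mean --- does not acquire additional randomness under the conditioning on $g=0$. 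Once that is in place, the rest is bookkeeping with Gaussian densities and spherical volumes.
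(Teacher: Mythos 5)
Your proposal is correct and follows essentially the same route as the paper: the paper's proof is a one-line reduction to Theorem 2 of \cite{arous2017landscape} (the reduction to a shifted GOE with a rank-one perturbation, with the parameters $t,\theta$ already computed in the text preceding the theorem), and your sketch simply spells out the spherical-latitude, Gaussian-density, and determinant bookkeeping that this citation subsumes. The one obstacle you flag --- that the conditioning on $g=0$ along $\mathbb{e}_1$ must not inject extra randomness into the Hessian --- is precisely the point handled in the cited reference, so nothing further is needed.
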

\begin{proof}
	The proof comes immediately from \cite{arous2017landscape} Thm. 2, see also Sec. 4.1.
\end{proof}
The quantity that we are interested in is the projection of Eq.~\eqref{eq:complexity_annealed} to the maximizing values of $\epsilon_2$ and $\epsilon_p$:
\begin{equation}\label{eq:complexity}
	\Sigma(m) = \max_{\epsilon_2,\,\epsilon_p} \tilde{\Sigma}_{\Delta_2,\Delta_p}(m,\epsilon_2,\epsilon_p).
\end{equation}
Eq.~\eqref{eq:complexity} allows to understand if at a given
correlation with the signal, there are regions with an exponential
expected number of minima, see Fig.~\ref{fig:complexity}. Thus it
allows to locate parameters where the landscapes is trivial.

We computed the expected number of minima, i.e. the so-called
annealed average. The annealed average might be dominated by rare
samples, and in general provides only an upper bound for typical
samples. The quenched complexity, i.e. the average of the logarithm of
the number of minima, is more involved. The quenched calculation was
done in the case of a the spiked tensor model \cite{ros2018complex}.
It is interesting to notice that in \cite{ros2018complex} the authors
found that the annealed complexity does not differ from the quenched
complexity for $m=0$. This combined with analogous preliminary results
for the spiked matrix-tensor model, suggest that considering the
quenched complexity would not change the conclusions of this paper
presented in the phase diagrams Fig.~\ref{fig:phase_diagram_simple}.

\section{Gradient Flow Analysis}\label{sec:GD}

In this section we analyze the performance of the gradient flow
descent in the loss function (\ref{eq:Hamiltonian})
\begin{equation}
	\frac{d}{dt}x_i(t) = -\mu(t)x_i(t) - \frac{\delta
		\mathcal{L}}{\delta x_i}(t)\,, \label{eq:gradient}
\end{equation}
where the Lagrange parameter $\mu(t)$ is set in a way to ensure the spherical
constraint $x \in \mathbb{S}^{N-1}(\sqrt N)$. Our aim is to understand
the final correlation between the ground truth signal and the
configuration reached by the gradient flow in large but finite time,
while $N\to \infty$.

The gradient flow (\ref{eq:gradient}) can be seen as a
zero-temperature limit of the Langevin algorithm where
\begin{equation}
	\frac{d}{dt}x_i(t) = -\mu(t)x_i(t) - \frac{\delta \mathcal{L}}{\delta x_i}(t) -\eta_i(t)\,,
\end{equation}
with $\eta_i(t)$ being the Langevin noise with zero mean and
covariance $\left\langle\eta_i(t)\eta_j(t')\right\rangle =
	2T\delta_{ij}\delta(t-t')$, where $T$ has the physical meaning of
temperature, the notation $\langle\dots \rangle$ stands for the
average over the noises $\xi_{ij}$ and $\xi_{i_1,\dots,i_p}$. As we take the limit $T\rightarrow0$, the noise becomes
peaked around zero, effectively recovering the gradient flow.

The performance of the Langevin algorithm was characterized recently
in \cite{sarao2018marvels} using equations developed in physics of
disordered systems
\cite{CHS93,cugliandolo1993analytical}. In
\cite{sarao2018marvels}  this characterization
was given for an arbitrary temperature $T$ and compared to the
landscape of the Bayes-optimal estimator \cite{antenucci2018glassy}. Here
we hence summarize and use the results of \cite{sarao2018marvels}
corresponding to the limit $T\rightarrow0$.

The Langevin dynamics with generic temperature is in the large size
limit, $N\to \infty$, characterized by a set of PDEs for the
self-correlation $C(t,t') = \lim_{N\to \infty}\left\langle\frac1N\sum
	x_i(t)x_i(t')\right\rangle$, the response function $R(t,t') =
	\lim_{N\to \infty} \left\langle\frac1N\sum\frac{\delta x_i(t)}{\delta
		\eta_i(t')}\right\rangle$, and the correlation with the signal
$m(t) = \lim_{N\to \infty} \left\langle\frac1N\sum
	x_i(t)x_i^*\right\rangle$. Ref. \cite{sarao2018marvels} established
that as the gradient flow evolves these quantities satisfy
eqs.~(74)-(76) in that paper. Taking the zero-temperature limit in those equations we obtain
\begin{align}
	\begin{split}
		&\frac{\partial}{\partial t} C(t,t') =-\tilde{\mu}(t)C(t,t')+
		Q'(\Cmag(t)) \Cmag(t')
		+ \int_0^t dt''
		R(t,t'')Q''(C(t,t''))C(t',t'')
		\\
		&\quad+ \int_0^{t'}dt'' R(t',t'')Q'(C(t,t''))\,,
	\end{split}\label{eq:GDSE C}
	\\
	\begin{split}
		&\frac{\partial}{\partial t} R(t,t')=-\tilde{\mu}(t)R(t,t')
		+\int_{t'}^tdt''R(t,t'')Q''(C(t,t''))R(t'',t')\,,
	\end{split}\label{eq:GDSE R}
	\\
	\begin{split}
		&\frac{\partial}{\partial t}  \Cmag(t) =-\tilde{\mu}(t)\Cmag(t)+Q'(\Cmag(t))
		+ \int_{0}^tdt'' R(t,t'')\Cmag(t'') Q''(C(t,t''))\,,
	\end{split}\label{eq:GDSE Cbar}
\end{align}
with $Q(f) = f^p/(p\Delta_p) + f^2/(2\Delta_2)$ and
$\tilde{\mu}(t)=\lim_{T\rightarrow0}T \mu(t)$ the rescaled spherical
constraint. Boundary conditions for the equations are
$C(t,t)=1\;\forall t$, $R(t,t')=0$  for all $t<t'$ and
$\lim_{t'\rightarrow t^-}R(t,t')=1\;\forall t$. An additional equation
for $\tilde{\mu}(t)$ is obtained by fixing $C(t,t)=1$ in
Eq.~\eqref{eq:GDSE C}. In the context of disordered systems those
equations have been established rigorously for a related case of the
matrix-tensor model without the spike \cite{arous2006cugliandolo}.

\vskip -0.2in

%
\begin{figure}[ht]
	\vskip 0.2in
	\begin{center}
		\centering
		\includegraphics[width=.5\columnwidth]{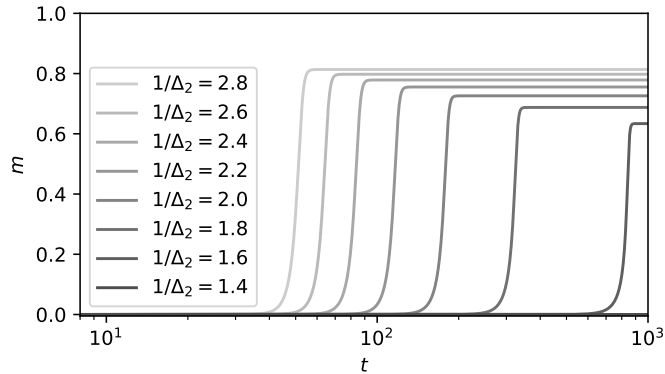}
		\caption{Eq.~\eqref{eq:GDSE Cbar} characterizes the evolution
		of the correlation of the gradient flow with the ground
		truth signal, evaluated for several values of $\Delta_2$, at
		$\Delta_p = 4.0$ starting from $\Cmag(0)=10^{-10}$. The
		dynamics displays a fast increase of the convergence time as
		$\Delta_2$ increases. At large times, the plateau we
		observe has the same value of correlation $m$ as the minima
		best correlated with the signal, as predicted via Kac-Rice
		approach.}
		\label{fig:GD_evolution}
	\end{center}

\end{figure}
%


Eqs.~(\ref{eq:GDSE C}-\ref{eq:GDSE Cbar}) are integrated numerically
showing the large-size-limit performance of the gradient flow
algorithm. Example of this evolution is given in
Fig.~\ref{fig:GD_evolution} for $p=3$, $\Delta_p=4$. The code is available online \cite{LSEcode_T0} and linked to this paper. For consistency we confirm
numerically that at large times the gradient flow reaches
values of the correlation that correspond exactly to the value of the
correlation of the minima correlated to the signal as obtained in the
Kac-Rice approach.

As the variance $\Delta_2$ increases the time it takes to the gradient flow to acquire good
correlation with the signal increases. We define the {\it convergence}
time $t_c$ as the time it takes to reach 1/2 of the final plateau. The
dependence of $t_c$ on $\Delta_2$ is consistent with a power law
divergence at $\Delta_2^{\rm GF}$. This is illustrated in
Fig.~\ref{fig:GD_limits} where we plot the convergence time as a
function of $\Delta_2$ and show the power-law fit in the inset. The
points $\Delta_2^{\rm GF}$ are collected and plotted in
Fig.~\ref{fig:phase_diagram_simple}, dotted blue
line.

\begin{figure}
	\vskip 0.2in
	\begin{center}
		\centering
		\includegraphics[width=0.5\columnwidth]{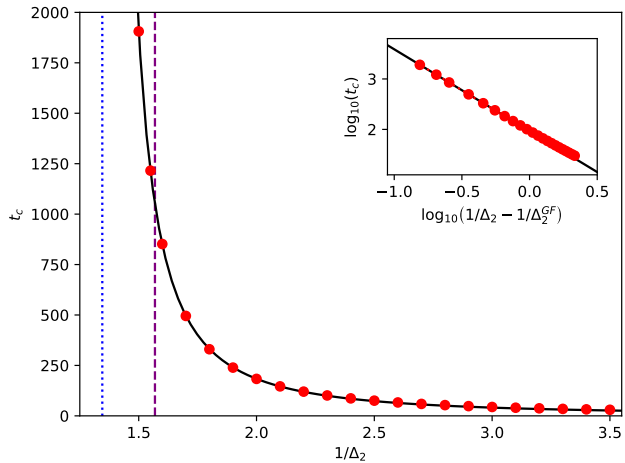}
		\caption{
		The convergence time the gradient flow takes to find a
		configuration well correlated with the signal for
		$\Delta_p=4.0$, $p=3$ as a function of $\Delta_2$, starting
		from $\Cmag(0)=10^{-10}$. The points are fitted
		with a power law consistent with a divergence point
		$1/\Delta_2^{\rm GF}
			=  1.35$ (vertical dotted line, log-log scale of the fit
		shown in the inset) while landscape trivialization occurs at
		$1/\Delta_2^{\rm triv} = 1.57$ (vertical dashed line).
		{\color{white} aaaaaaaaaaaaaaaaaaaaaaaaaaaaaaaaaaaaaaaaaaaaaaaaaaaaaaaaaaaaaaaaaaaaaaaaaaaaaaa }
		{\color{white} aaaaaaaaaaaaaaaaaaaaaaaaaaaaaaaaaaaaaaaaaaaaaaaaaaaaaaaaaaaaaaaaaaaaaaaaaaaaaaa }
		{\color{white} aaaaaaaaaaaaaaaaaaaaaaaaaaaaaaaaaaaaaaaaaaaaaaaaaaaaaaaaaaaaaaaaaaaaaaaaaaaaaaa }
		}
		\label{fig:GD_limits}
	\end{center}
	\vskip -0.6in
\end{figure}

From Fig.~\ref{fig:GD_limits} we see that the gradient flow algorithm
undergoes a considerable slow-down even in the region where the
landscape is trivial, i.e. does not have spurious local minimizers. At
the same time divergence of the convergence time happens only well inside the phase where spurious local
minimizers do exist.

\section{Maximum-Likelihood Approximate Message Passing}\label{sec:0T-AMP}

\begin{figure}[ht]
	\vskip 0.2in
	\begin{center}
		\centering
		\includegraphics[width=0.5\columnwidth]{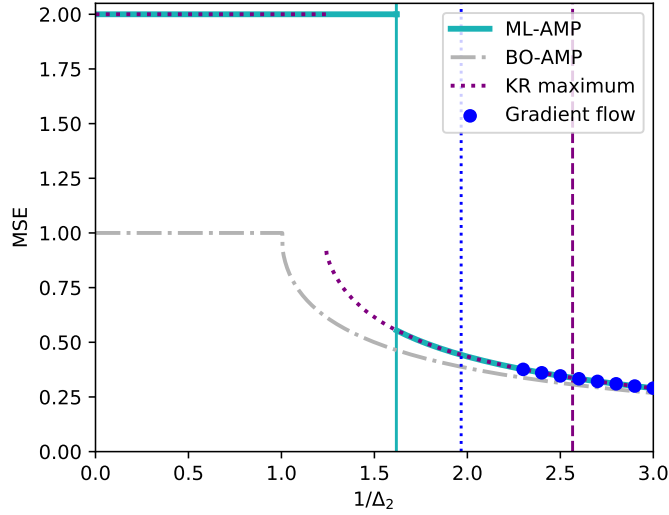}
		\caption{We show the mean-squared error (MSE) as achieved by
			the analyzed algorithms, for $p=3$, $\Delta_p = 1.0$ as a
			function of the signal-to-noise (snr) ratio  $1/\Delta_2$.
			The full cyan line corresponds to the error reached by the ML-AMP
			algorithm, it jumps discontinuously at $1/\Delta_2^{\rm ML-AMP}=1.62$.
			The blue points is the error reached by the
			gradient flow in time $t<1000$. The divergence of the convergence time
			is extrapolated to occur at $1/\Delta_2^{\rm GF} =1.97$, blue dotted vertical line.
			%
			%
			The purple dotted line represents the maximum having the largest $m$ of
			the complexity function $\Sigma(m)$,
			Eq.~\eqref{eq:complexity}. The vertical purple dashed line at
			$1/\Delta_2^{\rm triv} = 2.57$
			corresponds to the trivialization of the landscape, beyond
			which only local minima well correlated with the signal
			remain.
			We note that all these approaches agree on the value of the
			MSE. 
			For the sake of comparison we show (the dashed-dotted grey line) also
			the minimal-MSE achieved in the Bayes-optimal setting.
		}
		\label{fig:fixed_points_comparison}
	\end{center}
	\vskip -0.2in
\end{figure}

Approximate Message Passing (AMP) is a popular iterative algorithm
\cite{DMM09} with a key advantage of being analyzable 
		via a set of equations, called state evolution equations, that have
		been proved rigorously to follow the average evolution of the
		algorithm \cite{javanmard2013state}.
The maximum-likehood AMP (ML-AMP) algorithm studied in this paper is a generalization of AMP for
the pure spiked tensor model from
\cite{richard2014statistical} to the spiked matrix-tensor model. We will show that its fixed points
correspond to stationary points of the loss function \eqref{eq:Hamiltonian}. This should be contrasted with
the Bayes-optimal AMP (BO-AMO) that was
studied in \cite{sarao2018marvels} and aims to approximate the marginals of
the corresponding posterior probability distribution. The ML-AMP
instead aims to estimate the maximum-likelihood solution, $\hat{x}$. In
information theory the BO-AMP would correspond to the sum-product
algorithm, while the present one to the max-sum algorithm. In
statistical physics language the BO-AMP corresponds to temperature
one, while the present one to zero temperature. In the supporting
information we provide a schematic derivation of the ML-AMP as a
zero-temperature limit of the BO-AMP, using a scheme similar to \cite{LKZ17}.

The ML-AMP algorithm reads
\begin{align}
	\label{eq:0T-AMP_B}
	\begin{split}
		&B^t_i = \frac{\sqrt{(p-1)!}}{N^{(p-1)/2}}\sum_{k_2<\dots <k_p}\frac{T_{ik_2\dots k_p}}{\Delta_p}\hat{x}_{k_2}^t\dots \hat{x}_{k_p}^t
		+ \frac{1}{\sqrt{N}}\sum_{k}\frac{Y_{ik}}{\Delta_2}\hat{x}_k^t - \text{r}_t \hat{x}_i^{t-1}\,,
	\end{split}
	\\
	\label{eq:0T-AMP_x}
	 & \hat{x}_i^{t+1}=\frac{B^t_i}{\frac1{\sqrt{N}}||B^t||_2}\,,
	\\
	\label{eq:0T-AMP_sigma}
	 & \hat{\sigma}^{t+1}=\frac1{\frac1{\sqrt{N}}||B^t||_2}
\end{align}
with $||\cdots||^2_2$ the $\ell_2$-norm and $\text{r}_t$ the Onsager reaction term
\begin{equation}\label{eq:0T-AMP_onsager}
	\begin{split}
		\text{r}_t &= \frac1{\Delta_2}\frac1{N}\sum_k\hat\sigma_k^t 
		+\frac{p-1}{\Delta_p}\frac1{N}\sum_k\hat\sigma_k^t\left(\frac1{N}\sum_k\hat{x}_k^t\hat{x}_k^{t-1}\right)^{p-2}\,.
	\end{split}
\end{equation}

\subsection{ML-AMP \& Stationary Points of the Loss}\label{sec:AMP ridge}

Using an argument similar to Prop. 5.1 in
\cite{montanari2012graphical} we can show that a fixed points found by
ML-AMP corresponds to finding a stationary point of the loss Eq.~\eqref{eq:Hamiltonian} with a ridge regularizer.

\begin{property}\label{prop:0T-AMP_ridge}
	Given $(\hat{x}^*,\sigma^*)$ a fixed point of ML-AMP, then
	$\hat{x}^*$ satisfies the stationary condition of the loss.
\end{property}
\begin{proof}
	Let us denote $B^*$, $\text{r}^*$ the fixed point of
	Eqs.~\eqref{eq:0T-AMP_B} and \eqref{eq:0T-AMP_onsager}. From Eq.~\eqref{eq:0T-AMP_x} and Eq.~\eqref{eq:0T-AMP_B} we have
	\begin{equation}
		\begin{split}
			&\left(\frac1{\sqrt{N}}||B^*||_2 +\text{r}^*\right)x^* = \frac{1}{\sqrt{N}}\sum_{k}\frac{Y_{ik}}{\Delta_2}\hat{x}^*_i
			+ \frac{\sqrt{(p-1)!}}{N^{(p-1)/2}}\sum_{k_2<\dots <k_p}\frac{T_{ik_2\dots k_p}}{\Delta_p}\hat{x}_{k_2}^*\dots \hat{x}_{k_p}^*
		\end{split}  \label{eq:amp_rige}
	\end{equation}
	which is exactly solution of the derivative of Eq.~\eqref{eq:Hamiltonian} with respect to $x_i$ when the spherical constraint is enforced by a Lagrange multiplier $\mu$
	\begin{equation*}
		\begin{split}
			0 &= - \mu x_i + \frac{1}{\sqrt{N}}\sum_{k}\frac{Y_{ik}}{\Delta_2}x_i
			 +
			\frac{\sqrt{(p-1)!}}{N^{(p-1)/2}}\sum_{k_2<\dots
				<k_p}\frac{T_{ik_2\dots
						k_p}}{\Delta_p}x_{k_2}\dots x_{k_p} \, .
		\end{split}
	\end{equation*}
	Moreover ML-AMP by construction preserves the spherical constrain
	at every time iteration.
\end{proof}

\subsection{State Evolution}\label{sec:AMP SE}

The evolution of ML-AMP can be tracked through a set of equations called
state evolution (SE).
The state evolution can be characterized via an order parameter: $m^t
	= \frac1N\sum_i \hat{x}_i^tx_i^*$, the correlation of the ML-AMP-estimator
with the ground truth signal at time $t$.
According to the SE, as derived in the supporting
information, and proven for a general class of models in
\cite{javanmard2013state}, this parameter
evolves in the large $N$ limit as
\begin{align}
	\label{eq:0T-AMP_SE_m}
	 & m^{t+1} =\frac{\frac{m^t}{\Delta_2} + \frac{(m^t)^{p-1}}{\Delta_p}}{\sqrt{\frac1{\Delta_2}+\frac1{\Delta_p}+\left(\frac{m^t}{\Delta_2} + \frac{(m^t)^{p-1}}{\Delta_p}\right)^2}}\,,
\end{align}
and the mean square error correspondingly
\begin{equation}
	\text{MSE}^t = 2(1-m^{t}).
\end{equation}

Analysis of the simple scalar SE, Eq.~\eqref{eq:0T-AMP_SE_m},
allows to identify the error reached by the ML-AMP algorithm. We
first observe that $m=0$ is always a fixed point. For the performance
of ML-AMP is the stability of this fixed point that determines whether
the ML-AMP will be able to find a positive correlation with the signal or
not. Analyzing Eq.~\eqref{eq:0T-AMP_SE_m} we obtain that the $m=0$ is
a stable fixed point  for $\Delta_2 > \Delta_2^{\rm ML-AMP}$ where
\begin{equation}\label{eq:AMP stability m0}
	\Delta_2^{\rm ML-AMP}(\Delta_p) =
	\frac{-\Delta_p+\sqrt{\Delta_p^2+4\Delta_p}}2\,.
\end{equation}
Consequently for $\Delta_2 > \Delta_2^{\rm ML-AMP}$ the ML-AMP algorithm
converges to $m=0$, i.e. zero correlation with the signal.
The line $\Delta_2^{\rm ML-AMP}$  is the line plotted in
Fig.~\ref{fig:phase_diagram_simple}. For $p=3$ and $p=4$, we obtain that for $\Delta_2 < \Delta_2^{\rm ML-AMP}$
the ML-AMP algorithm converges to a positive  $m^*>0$ correlation with the
signal, depicted in Fig.~\ref{fig:fixed_points_comparison}. In
Fig.~\ref{fig:fixed_points_comparison} we also observe that this
correlation agrees 
with the position
of the maximum having largest value of $m$ in the complexity function
$\Sigma(m)$.
The trivialization of the landscape
occurs at $\Delta_2^{\rm triv} < \Delta_2^{\rm ML-AMP}$, thus showing
that for $\Delta_2^{\rm triv} < \Delta < \Delta_2^{\rm ML-AMP}$ the ML-AMP algorithm is able to ignore a good portion of the spurious local minima
and to converge to the local minima best correlated with the signal.

In Fig.~\ref{fig:fixed_points_comparison} we also compared to the MSE
obtained by the Bayes-optimal AMP that provably minimizes the MSE in
the case depicted in the figure \cite{sarao2018marvels}. We see that the gap between the
Bayes-optimal error and the one reached by the loss minimization
approaches goes rapidly to zero as $\Delta_2$ decreases.


\section{Discussion}

We analyzed the behavior of two algorithms for
optimizing a rough high-dimensional  loss landscape of the spiked
matrix-tensor model.
We used the Kac-Rice formula to count the average number of minima of the
loss function having a given correlation with the signal. Analyzing
the resulting formula we defined and located where the energy landscape
becomes trivial in the sence that spurious local minima disappear.
We analyzed the performance of gradient flow via integro-differential
state-evolution-like equations.
We delimited a region of parameters
for which the gradient flow is able to avoid the spurious minima and obtain a
good correlation with the signal in time linear in the input size.
We also analyzed the maximum-likelihood AMP algorithm, located the
region of parameters in which this algorithm works, which is larger
than the
region for which the gradient flow works.
%
%
The relation between
existence or absence of spurious local minima in the
loss landscapes of a generic optimization problems and the actual performance of optimization algorithm
is yet to be understood. Our analysis of the spiked matrix-tensor model
brings a case-study where we were able to specify this relation quantitatively.
%

\section*{Acknowledgments}

We thank G. Ben Arous, G. Biroli, C. Cammarota, G. Folena, and V. Ros for precious discussions.
We acknowledge funding from the ERC under the European
Union’s Horizon 2020 Research and Innovation Programme Grant
Agreement 714608-SMiLe and 307087-SPARCS; from the French National
Research Agency (ANR) grant PAIL; and from "Investissements d’Avenir"
LabEx PALM (ANR-10-LABX-0039-PALM) (SaMURai
and StatPhysDisSys). The manuscript was finalized while some of the authors were visiting KITP thus they acknowledge partial support by the National Science Foundation under Grant No. PHY-1748958.





\appendix

\section{Kac-Rice formula}

\begin{figure*}[h]
	\vskip 0.2in
	\begin{center}
	\centering
	\subfigure[$\Delta_2 = 2$]{
	    \includegraphics[width=.31\columnwidth]{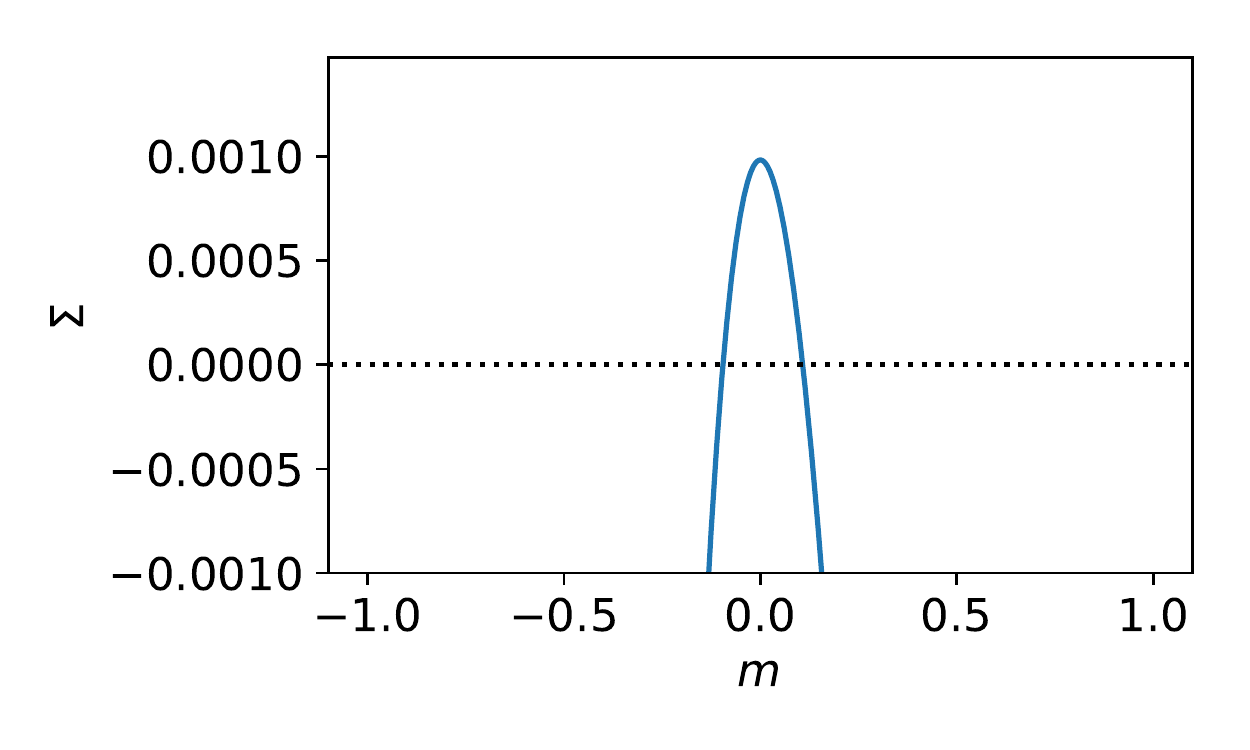}
	}
	\subfigure[$\Delta_2 = 2/3$]{
	    \includegraphics[width=.31\columnwidth]{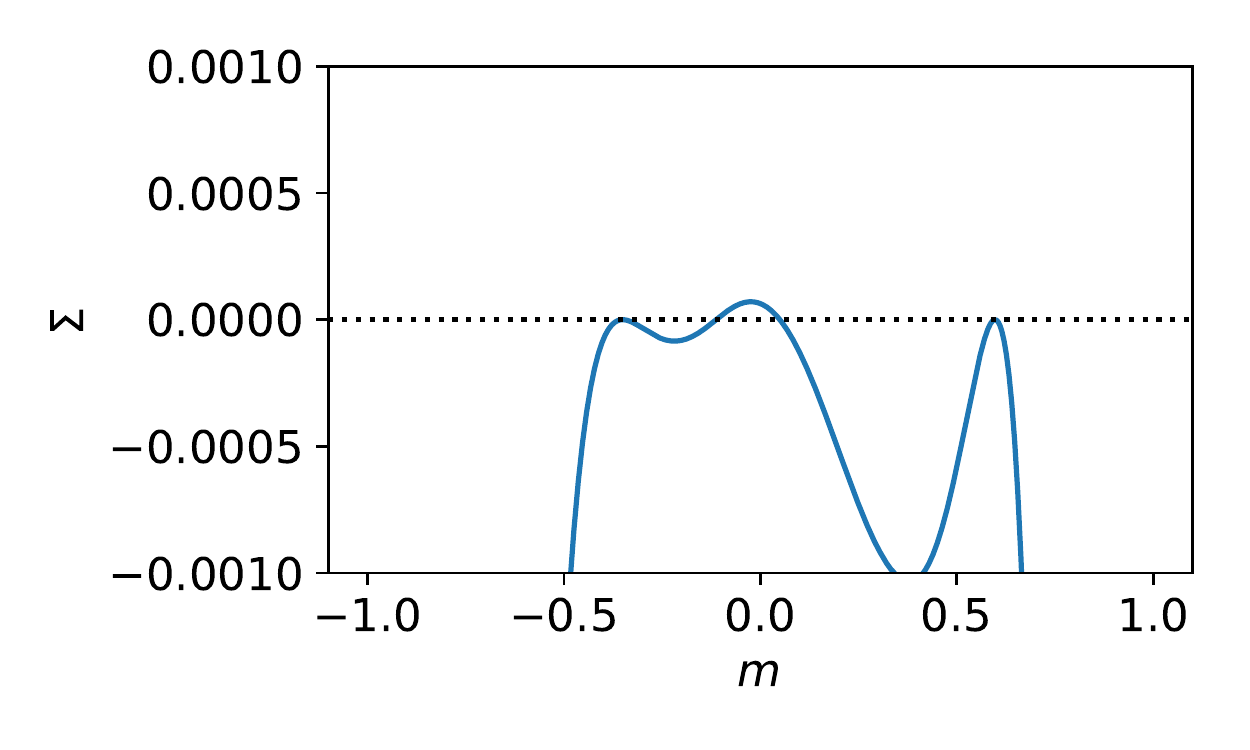}
	}
	\subfigure[$\Delta_2 = 2/5$]{
	    \includegraphics[width=.31\columnwidth]{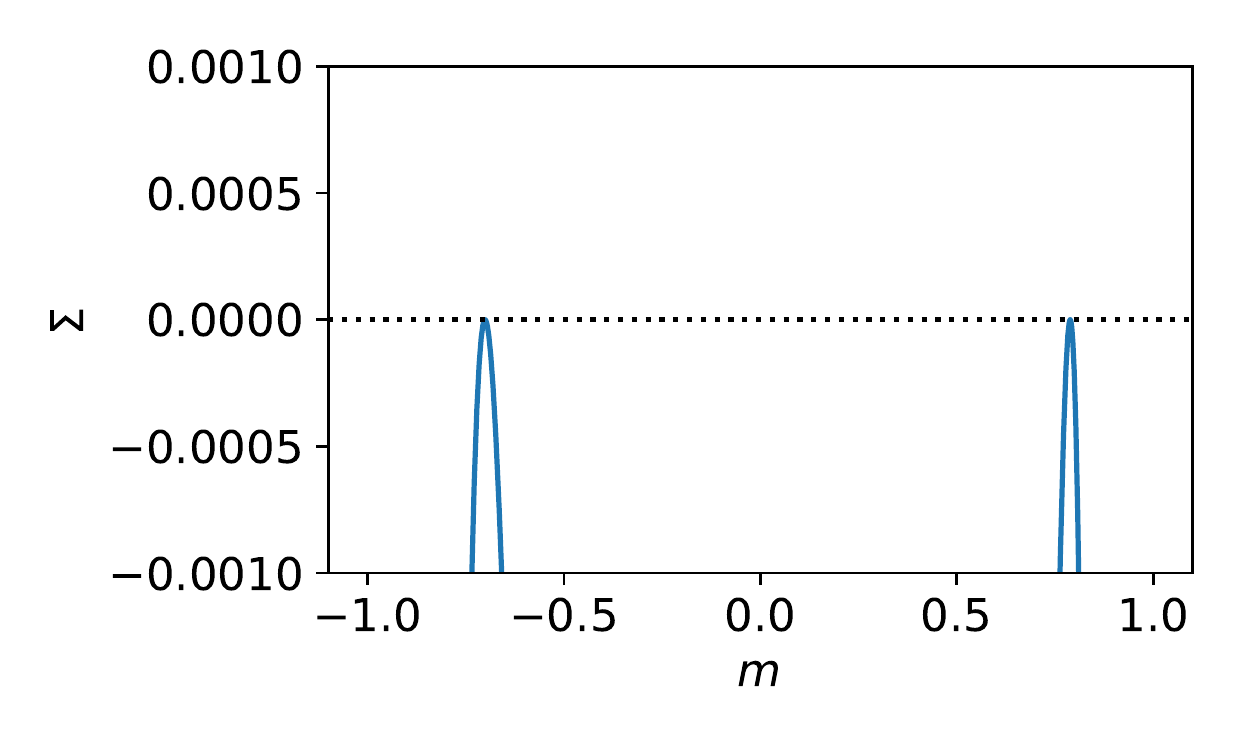}
	}
	\caption{Analogously to Fig.~\ref{fig:complexity}, the figures
          show the complexity, Eq.~\eqref{eq:complexity}, as a
          function of the correlation with the signal for different values of parameter $\Delta_2$ at fixed $\Delta_p=4.0$ in the case $p=3$.}
	\label{fig:complexity_p3}
	\end{center}
\end{figure*}

\subsection{$p$-odd cases}\label{app:p_odd}

\begin{figure}[ht]
	\vskip 0.2in
	\begin{center}
	\centering
	    \includegraphics[width=.45\columnwidth]{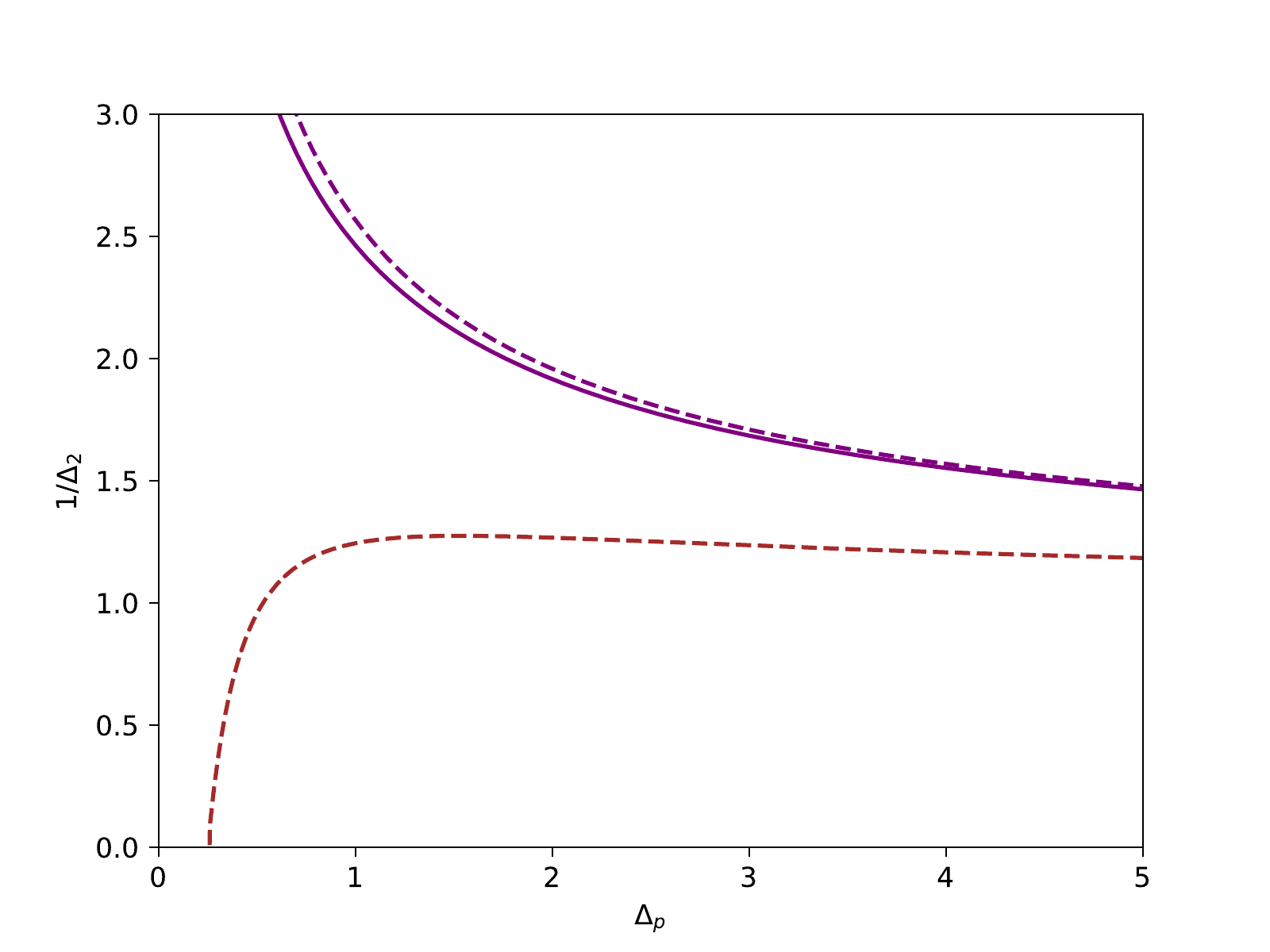}
	\caption{The thresholds representing the trivialization of the
          landscape (purple) and the point where the support of
          $\Sigma(m)\ge 0$ become disconnected (brown) for tensors of
          order $p=3$. We compare the two definitions of the
          trivialization threshold described in Sec.~\ref{app:p_odd}:
          the solid line considers just the positivity of the
          complexity Eq.~\eqref{eq:complexity} at $m=0$, the dashed
          line considers the whole non-informative band.}
	\label{fig:phase_diagram_p_odd}
	\end{center}
\end{figure}

In the cases in which the order of the tensor $p$ is odd we encounter
an interesting phenomenon due to the different symmetries of the two
types of observation. The matrix is symmetric by inverting the sign of
the signal, $\hat{x} \mapsto -\hat{x}$, while the tensor is not
symmetric for odd $p$. This creates an asymmetry in the complexity,
Fig.~\ref{fig:complexity_p3} (to be compared with
Fig.~\ref{fig:complexity}) and causes a shift toward lower
correlations of the band characterizing the non-informative
minima. Therefor observing when the complexity at $m=0$ becomes
negative does not guarantee that the non-informative minima
disappeared. To do so, one must check that the whole non-informative
band disappears. This should be contrasted with the case of even $p$
where a maximum of the complexity $\Sigma(m)$ is always at $m=0$. 
These two definitions of the threshold have little, but not
negligible, difference, see
Fig.~\ref{fig:phase_diagram_p_odd}. Observe that as $\Delta_p$
increases the peak of the complexity decreases, since the loss
Eq.~\eqref{eq:Hamiltonian} tends to the simple matrix-factorization
problem where the landscape is characterized by two isolated
minima. This implies that the two definitions become indistinguishable
for large $\Delta_p$. In the main text we use the definition taking
into account the maximum (even when it is not strictly at $m=0$) because gives a more accurate characterization of the trivialization threshold.

\section{Gradient Flow}

\subsection{Dependence on the initial conditions}

\begin{figure}[ht]
	\vskip 0.2in
	\begin{center}
	\centering
	    \includegraphics[width=.45\columnwidth]{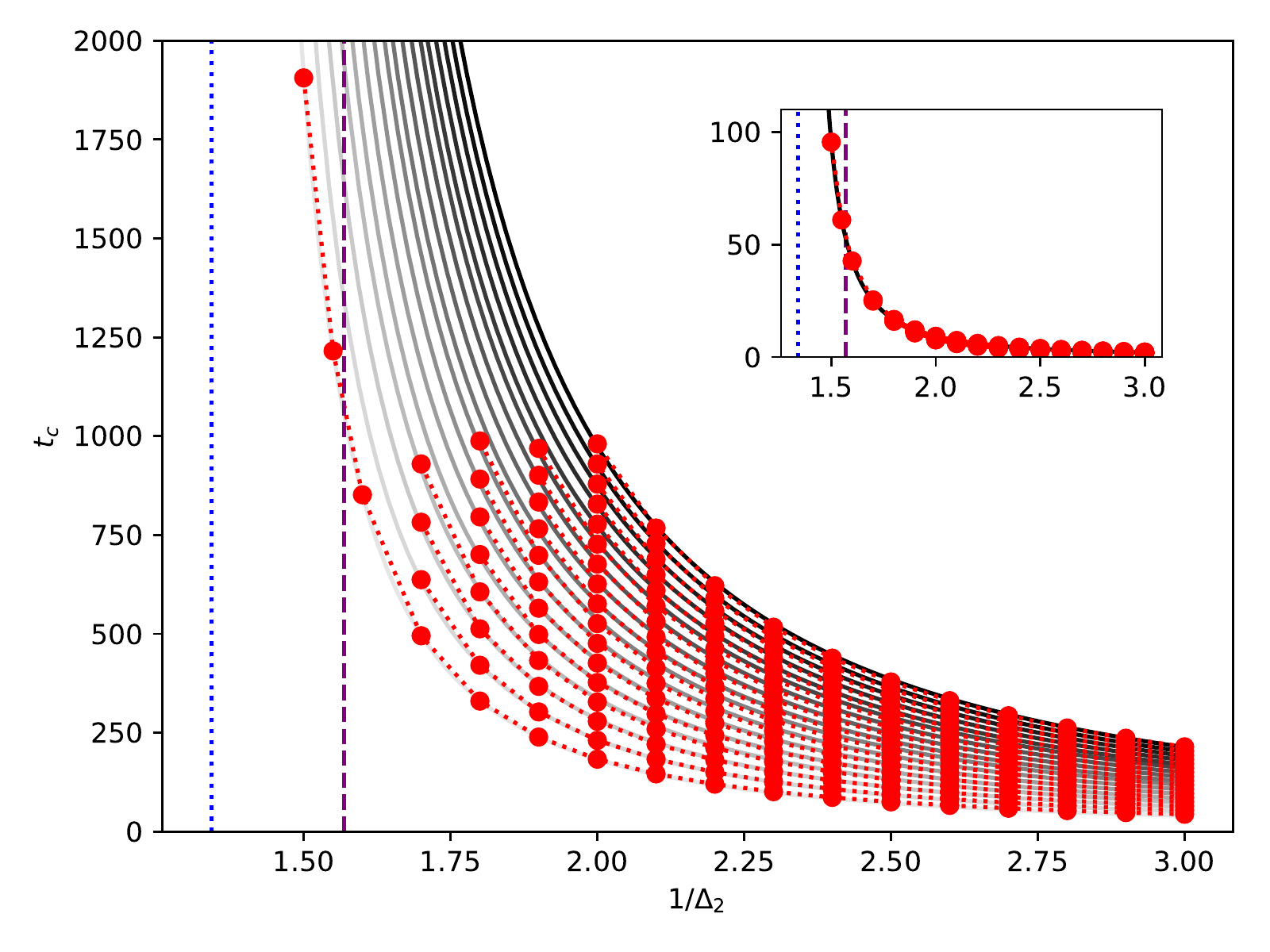}
	\caption{The time corresponding to convergence close to the
          signal is shown for $\Delta_p=4.0$ in the case
          $p=3$. Different shades of grey correspond to different initial conditions, from $\Cmag(0) = 10^{-10}$ (light grey) to $\Cmag(0) = 10^{-42}$ (dark grey). The different initializations collapse to a single line when the time is rescaled by $a^{\log\Cmag(0)}$ with $a=1.3$, see inset. In the figure we fit only the case $\Cmag(0) = 10^{-10}$ with a power law and use the same parameters for all the other fits with a vertical translation. The divergence point extrapolated is $1/\Delta_2^{\text{GF}} = 1.35$ and is represented by the vertical dotted line, while the dashed line identifies the landscape trivialization predicted with the Kac-Rice formula,  $1/\Delta_2^{\text{triv}} = 1.57$.
	}
	\label{fig:GD_limits_multi}
	\end{center}
\end{figure}

The dynamics of the gradient flow shows a dependence on the initial
conditions, because formally zero correlation is a (unstable) fixed
point of the GF state evolution. In practice we observe for both GF
and ML-AMP that instability of the fixed point is sufficient for good
performance of the algorithm. However, this makes the definition of
the convergence time depend of the initial condition. 

We observed from our numerical solution of the GF state evolution
equations that the initial condition add a factor $a^{\log\Cmag(0)}$ to the convergence times. Thus by fitting this term and rescaling the convergence times, the different initializations collapse into a single curve, see inset of Fig.~\ref{fig:GD_limits_multi}. Finally, the collapsed points were used to extrapolate the critical line as shown in the main text, Fig.~\ref{fig:GD_limits}.

\section{AMP}\label{sec:app_AMP}


\subsection{From AMP to ML-AMP}\label{sec:app_zero_T}

In this section we consider the spiked-tensor model in a Bayesian
way. We show how the Bayes-optimal AMP leads to the Maximum Likelihood
AMP using a temperature-like parameter $T$. We will introduce the
algorithm AMP for a generic $T$, and show that as $T\rightarrow0$ we
recover ML-AMP as presented in the main text. The probability
distribution we consider is 
\begin{equation}\label{eq:posterior general}
    \begin{split}
        P(X&|Y,T) \propto e^{-\mu\,\norm{x}^2}\prod_{i< j}e^{-\frac1{2T\Delta_2}\left(Y_{ij}-\frac{x_ix_j}{\sqrt{N}}\right)^2}
        \prod_{ i_1<\dots<i_p}e^{-\frac1{2T\Delta_p}\left(T_{i_1\dots i_p}-\frac{\sqrt{(p-1)!}}{N^{(p-1)/2}}x_{i_1}\dots x_{i_p}\right)^2}.
    \end{split}
\end{equation}
The scheme for deriving AMP estimating marginals of such a probability
distribution can be found in \cite{LKZ17,sarao2018marvels} and
consist in making a Gaussian assumption on the distribution of the
messages in the belief propagation (BP) algorithm and neglecting the
node-dependence in the messages. A final consideration to be used in
order to derive the algorithm is that the spherical constrain can be
imposed by setting $\frac1N\sum_i(\hat{x_i}^2+\sigma_i) = 1$ at every
iteration. The resulting AMP algorithm will iterate on the following equations:
\begin{align}
    \label{eq:AMP_B}
	\begin{split}
		&B^t_i = \frac{\sqrt{(p-1)!}}{N^{(p-1)/2}}\sum_{k_2<\dots <k_p}\frac{T_{ik_2\dots k_p}}{T\Delta_p}\hat{x}_{k_2}^t\dots \hat{x}_{k_p}^t
        + \frac{1}{\sqrt{N}}\sum_{k}\frac{Y_{ik}}{T\Delta_2}\hat{x}_k^t - \text{r}_t \hat{x}_i^{t-1}
	\end{split}
	\\
    \label{eq:AMP_x}
    &\hat{x}_i^{t+1}=2\frac{B^t_i}{1+\sqrt{1+\frac4N ||B^t||^2_2}}\,,
    \\
    \label{eq:AMP_sigma}
    &\sigma^{t+1}=\frac2{1+\sqrt{1+\frac4N ||B^t||^2_2}}\,.
\end{align}
with $||\cdots||^2_2$ the $\ell_2$-norm and $\text{r}_t$ the Onsager reaction term
\begin{equation}\label{eq:AMP_onsager}
	\begin{split}
    	\text{r}_t &= \frac1{\Delta_2T^2}\frac1{N}\sum_k\sigma_k^t
    	+\frac{p-1}{\Delta_pT^2}\frac1{N}\sum_k\sigma_k^t\left(\frac1{N}\sum_k\hat{x}_k^t\hat{x}_k^{t-1}\right)^{p-2}\,.
    \end{split}
\end{equation}
%

In the limit $T\rightarrow0$ AMP defined by Eqs.~(\ref{eq:AMP_B}-\ref{eq:AMP_onsager}) is equivalent to ML-AMP, Eqs.~(\ref{eq:0T-AMP_B}-\ref{eq:0T-AMP_onsager}).
To see this we	define the rescaled variables $\hat{\sigma}^t \doteq \sigma^t/T$, $\tilde{B}^t \doteq T\, B^t$ and $\tilde{\text{r}}_t \doteq T\text{r}_t$. Taking the limit $T\rightarrow0$ the expression for $\hat{x}^{t+1}_i$ Eq.~\eqref{eq:AMP_x} and the expression for $\hat{\sigma}^{t+1}_i$ Eq.~\eqref{eq:AMP_sigma} simplify as Eq.~\eqref{eq:0T-AMP_x} and as Eq.~\eqref{eq:0T-AMP_x} respectively. Dropping the tildes we obtain ML-AMP as presented in the main text. 

\subsection{State evolution}

The generic $T$ version of AMP has a slightly more complicated SE that
depends of two order parameters: the already introduced $m^t =
\frac1N\sum_i \hat{x}_i^tx_i^*$ and $q^t = \frac1N\sum_i ( \hat{x}_i^t
)^2$ the self-overlap of the estimator.
The SE equations are:
\begin{align}
	\label{eq:AMP_SE_m}
    & m^{t+1} = 2 \frac{z^t(T)}{1+\sqrt{1+4y^t(T)}}\,,
    \\
	\label{eq:AMP_SE_q}
    & q^{t+1} = 4 \frac{y^t(T)}{\left(1+\sqrt{1+4y^t(T)}\right)^2}
\end{align}
and
\begin{equation}
    \text{MSE}^t = 1-2m^{t}+q^{t}\,,
\end{equation}
with $y^t(T) = \left(z^t(T)\right)^2+\left(\frac1{T^2}\frac{q^t}{\Delta_2}+\frac1{T^2}\frac{(q^t)^{p-1}}{\Delta_p}\right)$ and $z^t(T) = \frac1{T}\frac{m^t}{\Delta_2}+\frac1{T}\frac{(m^t)^{p-1}}{\Delta_p}$. 


Given $\frac1N||\hat{x}^0||^2_2 \neq 0$, in the limit $T\rightarrow0$ AMP SE Eqs.~(\ref{eq:AMP_SE_m}-\ref{eq:AMP_SE_q}) simplify, to a single equation corresponding to ML-AMP SE Eq.~\eqref{eq:AMP SE}.
This is seen by taking the limit for Eq.~\eqref{eq:AMP_SE_q} which gives $q^t = 1\;\forall t>0$, implying $\text{MSE}^t = 2(1-m^t)$. Then, using the result for $q^t$, we show that Eq.~\eqref{eq:AMP_SE_m} tends to Eq.~\eqref{eq:0T-AMP_SE_m}.

\subsection{Derivation of spinodals}\label{sec:app_spinodals}

\begin{figure*}[ht]
	\vskip 0.2in
	\begin{center}%
	\centering
	\subfigure[$p=3$]{
	    \includegraphics[width=.45\columnwidth]{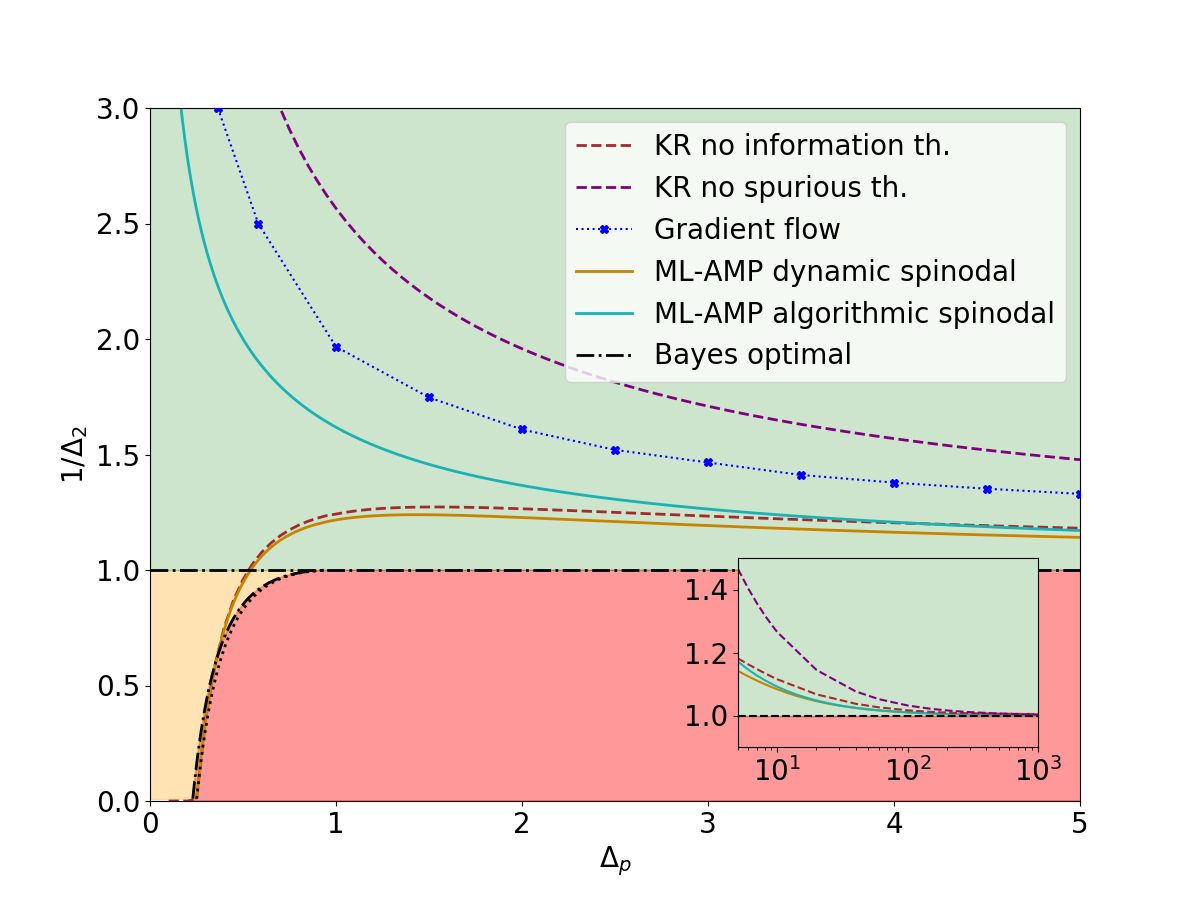}
	}
	\subfigure[$p=4$]{
	    \includegraphics[width=.45\columnwidth]{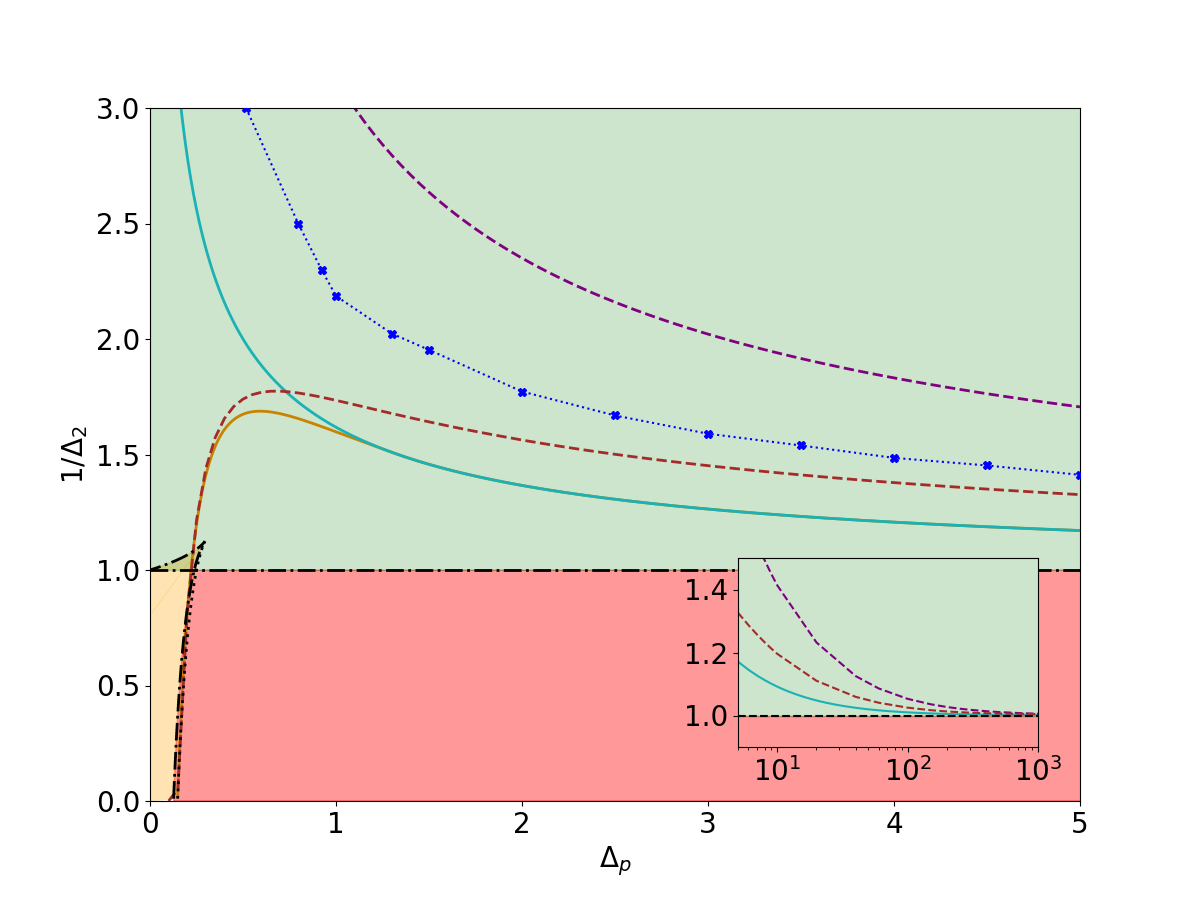}
	}
	\caption{The phase diagram already describe in
          Fig.~\ref{fig:phase_diagram_simple} with two additional
          lines. The dashed brown line is the limit predicted by
          Kac-Rice formula where the support of the $\Sigma(m) \ge 0$
          becomes disconnected (above the line). The full orange line
          is related to the ML-AMP algorithm, is called
          \textit{dynamical spinodal}, below it the algorithm
          converenges to $m=0$ even if initialized in the
          solution. 
		In the insets we show the large $\Delta_p$ behaviour of the thresholds, where we can observe that the lines merge at infinity.
}
	\label{fig:phase_diagram_full}
	\end{center}
	\vskip -0.2in
\end{figure*}

From SE Eq.~\eqref{eq:0T-AMP_SE_m} we can obtain analytical equations
for the \textit{spinodals}, the threshold of stability of the different ML-AMP fixed points. We have $\hat{x}^{t+1} = f_{SE}(z^t)$ with
\begin{equation}\label{eq:AMP SE}
    f_{SE} (z) = \frac{z}{\sqrt{z^2+\gamma}}, 
\end{equation}
with $\gamma = 1/\Delta_2 + 1/\Delta_p$ and $ z = m/\Delta_2 + m^{p-1}/\Delta_p$. Observe that: $f_{SE}'(z) = \frac{\gamma}{(z^2+\gamma)^{\frac32}}$.
We can now define either $\Delta_p\equiv\Delta_p(z;\Delta_2,\gamma) = \frac{f_{SE}(z)^{p-1}}{z-\frac{f_{SE}(z)}{\Delta_2}}$ or $\Delta_2\equiv\Delta_2(z;\Delta_p,\gamma) = \frac{f_{SE}(z)}{z-\frac{f_{SE}(z)^{p-1}}{\Delta_p}}$.

As remarked in \cite{sarao2018marvels}, the spinodals are given by the following condition:
\begin{equation}\label{eq:AMP spinodal implicit}
	\begin{split}
	    0 &= \frac{d\log\Delta_2}{dm}\propto \frac{d\log\Delta_2}{dz} =
	    \frac{z\left[(p-2)\gamma\left(\frac{z}{\sqrt{z^2+\gamma}}\right)^{p-1}-z^3\Delta_p\right]}{z(z^2+\gamma)\left[\Delta_pz^2-z\left(\frac{z}{\sqrt{z^2+\gamma}}\right)^{p-1}\right]}.
	\end{split}
\end{equation}

\begin{figure}
	\begin{center}
	    \includegraphics[width=.45\columnwidth]{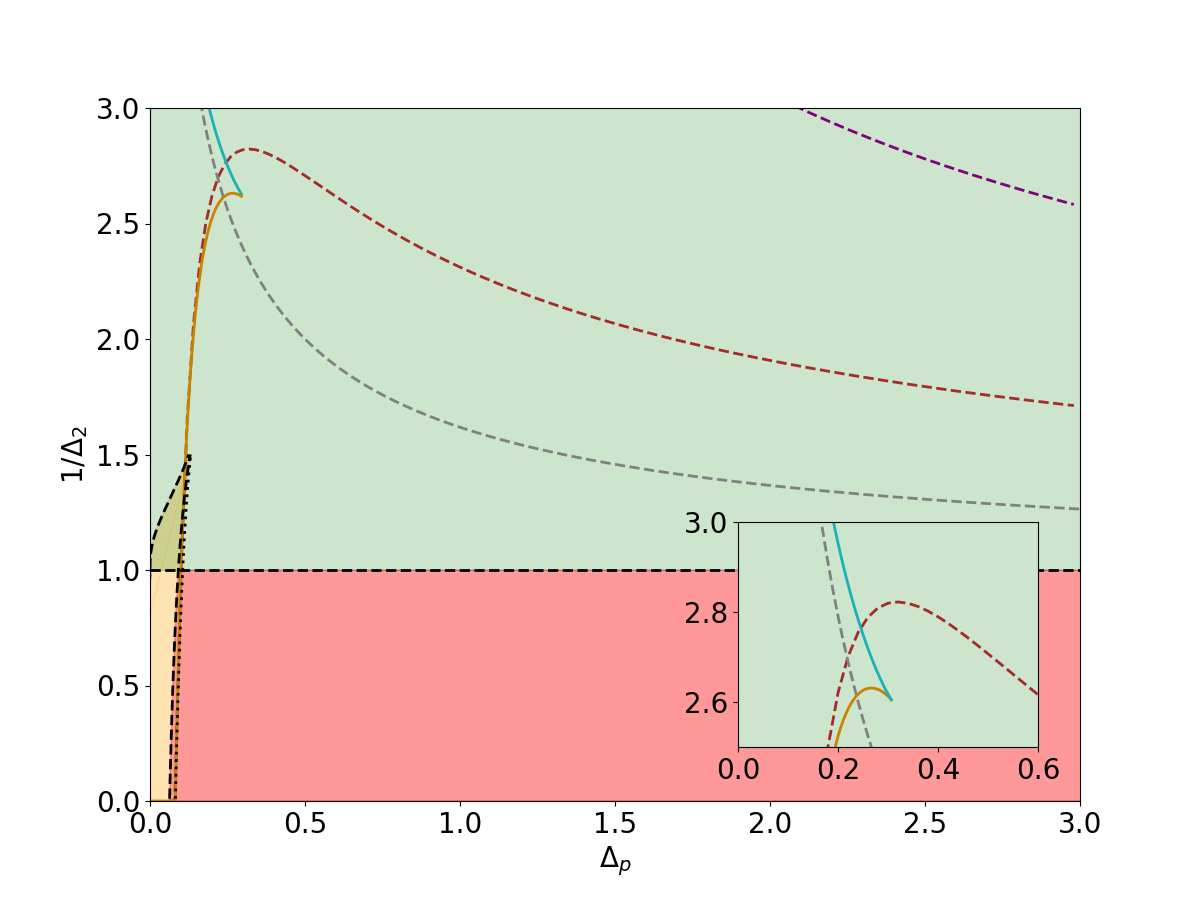}
	\caption{Phase diagram as shown in
          Fig.~\ref{fig:phase_diagram_full} for the case $p=6$. The
          difference between $p=3,4$ and $p>4$ is that a new phase
          appears, called hybrid hard phase, where two fixed points of
          ML-AMP aligned with the signal are present and the
          convergence to one or the other depends on the
          initialization. The region is highlighted in the inset. In
          the phase diagram the grey dashed line represent the
          threshold above which the non-informative fixed point becomes unstable.}
	\label{fig:phase_diagram_p6}
	\end{center}
\end{figure}

A trivial solution is given by $z\rightarrow0$ corresponding to stability of the non-informative solution $m=0$, and gives the \textit{algorithmic spinodal} for the cases $p\in\{3,4\}$. This solution and has a very simple equation for every $p$: $\Delta_2 =1/\sqrt\gamma$ giving Eq.~\eqref{eq:AMP stability m0}, already presented in the main text.
An interesting implication of Eq.~\eqref{eq:AMP stability m0} is that it is independent from the value of $p$, it is in some sense \textit{universal} among the $2+p$-models.

The expression for the stability of the informative solution, \textit{dynamical spinodal}, is less straightforward, but analytical progresses can be done in the cases $p=3$ and $p=6$ (using Cardano formula) and in the case $p=4$ for which it is equivalent to a second order polynomial
\begin{equation}
    z^2+\gamma = \left(\frac{(p-2)\gamma}{\Delta_p}\right)^{\frac2{(p-1)}} = \left(\frac{2\gamma}{\Delta_4}\right)^{\frac23}\,,
\end{equation}
that admits a single solution in $\mathbb{R}^+$:
\begin{equation}
    z = \sqrt{\left(\frac{2\gamma}{\Delta_4}\right)^{\frac23}-\gamma}\,.
\end{equation}

An important point in the phase diagram is where the algorithmic and
dynamical spinodals meet, this is called the \textit{tricritical point}. Its value is obtained for different $p$, numerically (for $p>4$) and analytically (for $p=4$), and is reported in Table~\ref{table:tricritical points}.
The case $p=3$ does not show any tricritical point for any finite $\Delta_p$, the two lines eventually meet at $\Delta_p = \infty$ when the spiked matrix problem is recovered.
\begin{table}
	\centering
	\begin{tabular}{c|c|c}
	 $p$ & $\Delta_2$ & $\Delta_p$ \\ \hline
	 4 & $\frac23 \simeq 0.667$ & $\frac43 \simeq 1.333$ \\ \hline
	 5 & 0.470 & 0.451 \\ \hline
	 6 & 0.384 & 0.305 \\ \hline
	 7 & 0.322 & 0.220 \\ \hline
	 8 & 0.279 & 0.172 \\ \hline
	 9 & 0.246 & 0.147 \\ \hline
	 10 & 0.220 & 0.121 
	\end{tabular}
	\caption{Table of the values of tricritical points for $p\ge4$.}
	\label{table:tricritical points}
\end{table}

For the cases $p>4$ we observe additionally the zero temperature
analog of what is called \textit{hybrid phase} in AMP in Bayes-optimal
regime \cite{Typology18}. The hybrid phase is illustrated in
Fig.~\ref{fig:phase_diagram_p6}. This phase is defined as a region
where the ML-AMP algorithm initialized at random converges to a
solution with positive correlation but that is less correlated then
the solution achievable starting from the solution. In these cases
Eq.~\eqref{eq:AMP stability m0} does not correspond to the algorithmic
spinodal but it is just the stability of the non-informative solution.

\end{document}